\def\eqref#1{Eq.~(\ref{#1})}
\def\1{\bm{1}}
\def\rva{{\mathbf{a}}}
\def\rvb{{\mathbf{b}}}
\def\rvt{{\mathbf{t}}}
\def\rvx{{\mathbf{x}}}
\def\rvy{{\mathbf{y}}}
\def\rmI{{\mathbf{I}}}
\def\vx{{\bm{x}}}
\DeclareMathAlphabet{\mathsfit}{\encodingdefault}{\sfdefault}{m}{sl}
\SetMathAlphabet{\mathsfit}{bold}{\encodingdefault}{\sfdefault}{bx}{n}
\def\gL{{\mathcal{L}}}
\def\gN{{\mathcal{N}}}
\def\gT{{\mathcal{T}}}
\def\gX{{\mathcal{X}}}
\def\gY{{\mathcal{Y}}}
\def\sR{{\mathbb{R}}}
\newcommand{\E}{\mathbb{E}}
\definecolor{lg}{gray}{0.9}
\definecolor{dg}{RGB}{0,150,0}
\definecolor{dr}{RGB}{139,0,0}
\theoremstyle{plain}
\newtheorem{theorem}{Theorem}[section]
\newtheorem{proposition}[theorem]{Proposition}
\newtheorem{lemma}[theorem]{Lemma}
\theoremstyle{definition}
\newtheorem{definition}[theorem]{Definition}
\theoremstyle{remark}
\newtheorem{remark}[theorem]{Remark}
\icmltitlerunning{Sample-Specific Noise Injection For Diffusion-Based Adversarial Purification}
\begin{document}

\twocolumn[
\icmltitle{Sample-Specific Noise Injection For Diffusion-Based Adversarial Purification}



\icmlsetsymbol{equal}{*}

\begin{icmlauthorlist}
\icmlauthor{Yuhao Sun}{equal,melb}
\icmlauthor{Jiacheng Zhang}{equal,melb}
\icmlauthor{Zesheng Ye}{equal,melb}
\icmlauthor{Chaowei Xiao}{wisconsin}
\icmlauthor{Feng Liu}{melb}
\end{icmlauthorlist}

\icmlaffiliation{melb}{School of Computing and Information Systems, The University of Melbourne}
\icmlaffiliation{wisconsin}{University of Wisconsin, Madison}

\icmlcorrespondingauthor{Feng Liu}{fengliu.ml@gmail.com}

\icmlkeywords{Machine Learning, ICML}

\vskip 0.3in
]



\printAffiliationsAndNotice{\icmlEqualContribution} 

\begin{abstract}
\emph{Diffusion-based purification} (DBP) methods aim to remove adversarial noise from the input sample by first injecting Gaussian noise through a forward diffusion process, and then recovering the clean example through a reverse generative process. 
In the above process, how much Gaussian noise is injected to the input sample is key to the success of DBP methods, which is controlled by a constant noise level $t^*$ for all samples in existing methods.
In this paper, we discover that an optimal $t^*$ for each sample indeed could be different.
Intuitively, the cleaner a sample is, the less the noise it should be injected, and vice versa.
Motivated by this finding, we propose a new framework, called \emph{\textbf{S}ample-specific \textbf{S}core-aware \textbf{N}oise \textbf{I}njection} (SSNI). 
Specifically, SSNI uses a pre-trained score network to estimate how much a data point deviates from the clean data distribution (i.e., score norms).
Then, based on the magnitude of score norms, SSNI applies a reweighting function to adaptively adjust $t^*$ for each sample, achieving sample-specific noise injections.
Empirically, incorporating our framework with existing DBP methods results in a notable improvement in both accuracy and robustness on CIFAR-10 and ImageNet-1K, highlighting the necessity to allocate \emph{distinct noise levels to different samples} in DBP methods.
Our code is available at: \url{https://github.com/tmlr-group/SSNI}.
\end{abstract}

\section{Introduction}
\emph{Deep neural networks} (DNNs) are vulnerable to adversarial examples, which is a longstanding problem in deep learning \citep{szegedy2014intriguing, goodfellow2015explaining}. 
Adversarial examples aim to mislead DNNs into making erroneous predictions by adding imperceptible adversarial noise to clean examples, which pose a significant security threat in critical applications \citep{dong2019efficient, cao2021invisible, jing2021too, bo2025trustworthy}.
To defend against adversarial examples, \emph{adversarial purification} (AP) stands out as a representative defensive mechanism, by leveraging pre-trained generative models to purify adversarial examples back towards their natural counterparts before feeding into a pre-trained classifier \citep{yoon2021adversarial, nie2022diffusion}.
Notably, AP methods benefit from their modularity, as the purifier operates independently of the downstream classifier, which facilitates seamless integration into existing systems and positions AP as a practical approach to improve the adversarial robustness of DNN-based classifiers.

Recently, \emph{diffusion-based purification} (DBP) methods have gained much attention as a promising framework in AP, which leverage the denoising nature of diffusion models to mitigate adversarial noise \citep{nie2022diffusion, xiao2023densepure, lee2023robust}. 
Generally, diffusion models train a forward process that maps from data distributions to simple distributions, e.g., Gaussian, and reverse this mapping via a reverse generative process \citep{ho2020denoising, song2021score}.
When applied in DBP methods, the forward process gradually injects Gaussian noise into the input sample, while the reverse process gradually purify noisy sample to recover the clean sample. 
The quality of the purified sample heavily depends on the amount of Gaussian noise added to the input during the forward process, which can be controlled by a noise level parameter $t^*$.
Existing DBP methods \citep{nie2022diffusion, xiao2023densepure, lee2023robust} manually select a constant $t^*$ for all samples.

 \begin{figure*}[!ht]
    \centering
    \begin{subfigure}{0.32\textwidth}
        \centering
        \includegraphics[width=\linewidth]{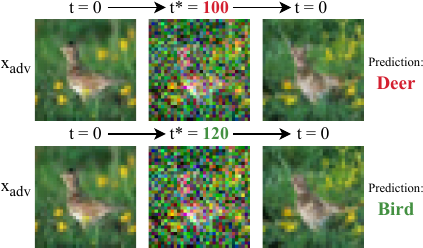}
        \caption{}
        \label{fig: motivation-3}
    \end{subfigure}
    \hfill
    \begin{subfigure}{0.32\textwidth}
        \centering
        \includegraphics[width=\linewidth]{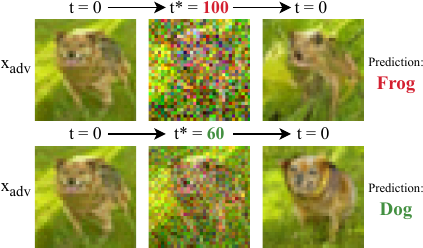}
        \caption{}
        \label{fig: motivation-2}
    \end{subfigure}
    \hfill
    \begin{subfigure}{0.33\textwidth}
        \centering
        \includegraphics[width=\linewidth]{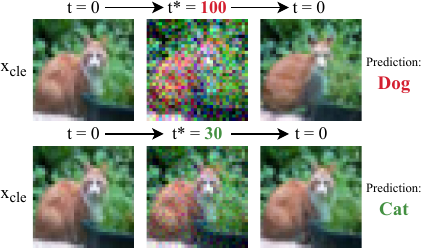}
        \caption{}
        \label{fig: motivation-1}
    \end{subfigure}
    \caption{For each sub-figure: the 1st column contains the input (i.e., could either be AEs or CEs), the 2nd column contains noise-injected examples with different $t^*$s, and the 3rd column contains purified examples. We use \emph{DiffPure} \citep{nie2022diffusion} with a sample-shared $t^* = 100$ selected by \citet{nie2022diffusion} to conduct this experiment on CIFAR-10 \citep{cifar}.
    The globally shared $t^* = 100$ offers a baseline, but results in suboptimal prediction performance compared to what could be achieved by tuning the noise level for individual samples.
    Notably, while the recovered images obtained by different noise levels may be visually indistinguishable, they carry different semantics. 
    For instance, the image is classified as ``frog'' (incorrect) with $t^*=100$ but as ``dog'' (correct) with $t^*=60$ (Figure~\ref{fig: motivation-2}). 
    These \emph{highlight the need for a sample-wise noise level adjustment}.}
    \label{fig: motivation}
\end{figure*}

However, we find that using a sample-shared $t^*$ may \emph{overlook} the fact that an optimal $t^*$ indeed could be different at sample-level, as demonstrated in Figure \ref{fig: motivation}. 
For example, in Figure \ref{fig: motivation-3}, $t^* = 100$ is too small, resulting in the adversarial noise not being sufficiently removed by the diffusion models.
This is because diffusion models are good at denoising samples that have been sufficiently corrupted by Gaussian noise through the forward process \citep{ho2020denoising, song2021score}. 
With a small $t^*$, the sample remains insufficiently corrupted, which limits the denoising capability in the reverse process and thereby compromising the robustness against adversarial examples.
On the other hand, in Figure \ref{fig: motivation-2} and \ref{fig: motivation-1}, $t^* = 100$ is too large, resulting in excessive disruption of the sample's semantic information during the forward process, which makes it difficult to recover the original semantics in the reverse process.
In this case, both robustness and clean accuracy are compromised, as the purified samples struggle to preserve the semantic consistency of clean samples.
These observations motivate us to make the \emph{first} attempt to adjust the noise level on a sample-specific basis.

In this paper, we propose \emph{\textbf{S}ample-specific \textbf{S}core-aware \textbf{N}oise \textbf{I}njection} (SSNI), a new framework that leverages the distance of a sample from the clean data distribution to adaptively adjust $t^*$ on a sample-specific basis. SSNI aims to inject less noise to cleaner samples, and vice versa. 

To implement SSNI, inspired by the fact that \emph{scores} (i.e., $\nabla_{\rvx}{\log p_t(\rvx)}$) reflect the directional momentum of samples toward the high-density areas of clean data distribution \citep{song2019generative}, we use \emph{score norms} (i.e., $\left\|\nabla_{\rvx}{\log p_t(\rvx)}\right\|$) as a natural metric to measure the deviation of a data point from the clean data distribution. 
In Section \ref{Sec: motivation}, we establish the relationship between the score norm and the noise level required for different samples.
Specifically, samples with different score norms tend to have accumulated different noise levels. 
Furthermore, we empirically show that the cleaner samples -- those closer to the clean data distribution -- exhibit lower score norm, justifying the rationale of using score norms for reweighting $t^*$.
Concretely, we use a pre-trained score network to estimate the score norm for each sample.
Based on this, we propose two reweighting functions that adaptively adjust $t^*$ according to its score norm, achieving sample-specific noise injections (see Section \ref{sec: realization of SSNI}). Notably, this reweighting process is \emph{lightweight}, ensuring that SSNI is computationally feasible and can be applied in practice with minimal overhead (see Section \ref{sec: inference time}).

Through extensive evaluations on benchmark image datasets such as CIFAR-10 \citep{cifar} and ImageNet-1K \citep{deng2009imagenet}, we demonstrate the effectiveness of SSNI in Section \ref{Sec: experiments}. 
Specifically, combined with different DBP methods \citep{nie2022diffusion, xiao2023densepure, lee2023robust}, SSNI can boost clean accuracy and robust accuracy \emph{simultaneously} by a notable margin against the well-designed adaptive white-box attack (see Section \ref{Sec: performance evaluation}). 

The success of SSNI takes root in the following aspects: (1) an optimal noise level $t^*$ for each sample indeed could be different, making SSNI an effective approach to unleash the intrinsic strength of DBP methods; (2) existing DBP methods often inject excessive noise into clean samples, resulting in a degradation in clean accuracy. By contrast, SSNI injects less noise to clean samples, and thereby notably improving the clean accuracy. Meanwhile, SSNI can effectively handle adversarial samples by injecting sufficient noise on each sample; (3) SSNI is designed as general framework instead of a specific method, allowing it to be seamlessly integrated with a variety of existing DBP methods.

\section{Preliminary and Related Work}
\label{Sec: Preliminary and Related Work}
In this section, we first review diffusion models and scores in detail. We then review the related work of DBP methods. Detailed related work can be found in Appendix \ref{A: related work}.

\textbf{Diffusion models} are generative models designed to approximate the underlying clean data distribution $p(\rvx_0)$, by learning a parametric distribution $p_{\phi}(\rvx_0)$ with a {\it forward} and a {\it reverse} process.
In \emph{Denoising Diffusion Probabilistic Models} (DDPM) \citep{ho2020denoising}, the {\it forward} process of a diffusion model defined on $\gX \subseteq \sR^d$ can be expressed by:
\begin{equation}\label{eq: forward}
    q(\rvx_t | \rvx_0) = \gN \left( \rvx_t; \sqrt{\bar{\alpha}} \rvx_0, (1 - \bar{\alpha}_t) \rmI \right),
\end{equation}
where $\bar{\alpha}_t = \prod^t_{i=1}(1-\beta_i)$ and $\{\beta_t\}_{t \in [0, T]}$ are predefined noise scales with $\beta_t \in (0, 1)$ for all $t$. As $t$ increases, $\rvx_t$ converges toward isotropic Gaussian noise.
In the {\it reverse} process, DDPM seeks to recover the clean data from noise by simulating a Markov chain in the reverse direction over $T$ steps. 
The reverse transition at each intermediate step is modeled by
\begin{equation}\label{eq: reverse}
    p_{\phi}(\rvx_{t-1}|\rvx_t) = \mathcal{N}(\rvx_{t-1}; \mu_{\phi}(\rvx_t, t), \sigma_t^2 \mathbf{I}),
\end{equation}
where $\mu_{\phi}(\rvx_t, t)=\frac{1}{\sqrt{1-\beta_t}} \left( \rvx_t - \frac{\beta_t}{\sqrt{1 - \bar{\alpha}_t}} \boldsymbol{\epsilon}_{\phi}(\rvx_t, t) \right)$ is the predicted mean at $t$, and $\sigma_t$ is a fixed variance \citep{ho2020denoising}.
More specifically, the model learns to predict the noise $\boldsymbol{\epsilon}_{\phi^*}(\rvx_t, t)$ added to the data.
The training objective minimizes the distance between the true and predicted noise, accounting for different noise levels controlled by $t$:
\begin{equation}
    \boldsymbol{\phi}^* = \arg\min_{\boldsymbol{\phi}} \E_{\rvx_0, t, \boldsymbol{\epsilon}} \left[ \left\lVert \boldsymbol{\epsilon} - \boldsymbol{\epsilon_{\phi}} \left( \sqrt{\bar{\alpha}_t} \rvx_0 + \sqrt{1 - \bar{\alpha}_t} \boldsymbol{\epsilon}, t \right) \right \rVert_2^2 \right]. \nonumber
\end{equation}

The generative process starts from pure noise $\rvx_T \sim \gN(\mathbf{0}, \rmI)$ and progressively removes noise through $T$ steps, with a random noise term $\boldsymbol{\epsilon} \sim \gN(\mathbf{0}, \rmI)$ incorporated at each step:
\begin{equation*}
    \hat{\rvx}_{t-1} = \frac{1}{\sqrt{1 - \beta_t}} \left(\hat{\rvx}_t - \frac{\beta_t}{\sqrt{1-\bar{\alpha}_t}} \boldsymbol{\epsilon}_{\phi^*}(\hat{\rvx}_t, t) \right) + \sqrt{\beta_{t}}\boldsymbol{\epsilon}.
    \label{eq: DDPM Reverse Sampling}
\end{equation*}

\textbf{Score and score norm.} 
In diffusion models, \emph{score} refers to the gradient of the log-probability density $\nabla_{\rvx}{\log p_t(\rvx)}$, which maps the steepest ascent direction of the log-density in the probability space~\citep{song2019generative}.
The associated \emph{score norm} $\left\|\nabla_{\rvx}{\log p(\rvx)}\right\|$ measures the gradient's magnitude, reflecting how much a data point deviates from the clean data distribution: points located in low-probability regions typically exhibit larger score norms, while those situated closer to high-density regions of clean data manifest smaller score norms. 
This property is valuable for adversarial example detection, as demonstrated by \citet{yoon2021adversarial} who establish that score norms can effectively distinguish between adversarial and clean examples. However, direct computation of the score is often intractable for complex data distributions (e.g., images). 
In practice, it is estimated using neural networks $s_{\theta}: \gX \times \sR_{+} \to \gX$ that take as input both a data point $\rvx$ and a specified timestep $t^{\rm S} \in \sR_{+}$ to approximate $\nabla_{\rvx}{\log p_t(\rvx)}$ by minimizing the Fisher divergence between the true and estimated scores.
Also, $s_{\theta}(\rvx, t^{\rm S})$ enables efficient computation of $\left\|\nabla_{\rvx}{\log p(\rvx)}\right\|$.


\textbf{Diffusion-based purification.} \emph{Adversarial purification} (AP) leverages generative models as an add-on module to purify adversarial examples before classification. Within this context, \emph{diffusion-based purification} (DBP) methods have emerged as a promising framework, exploiting the inherent denoising nature of diffusion models to filter out adversarial noise \citep{nie2022diffusion, wang2022guided, xiao2023densepure, lee2023robust}. 
Specifically, DBP works with \emph{pre-trained} diffusion model by first corrupting an adversarial input through the forward process and then iteratively reconstructing it via the reverse process.
This projects the input back onto the clean data manifold while stripping away adversarial artifacts.
\citet{wang2022guided} introduce input guidance during the reverse diffusion process to ensure the purified outputs stay close to the inputs. 
\citet{lee2023robust} propose a fine-tuned gradual noise scheduling for multi-step purifications. 
\citet{bai2024diffusion} improve the reverse diffusion process by incorporating contrastive objectives.

\begin{figure}[t]
    \centering
    \includegraphics[width=\linewidth]{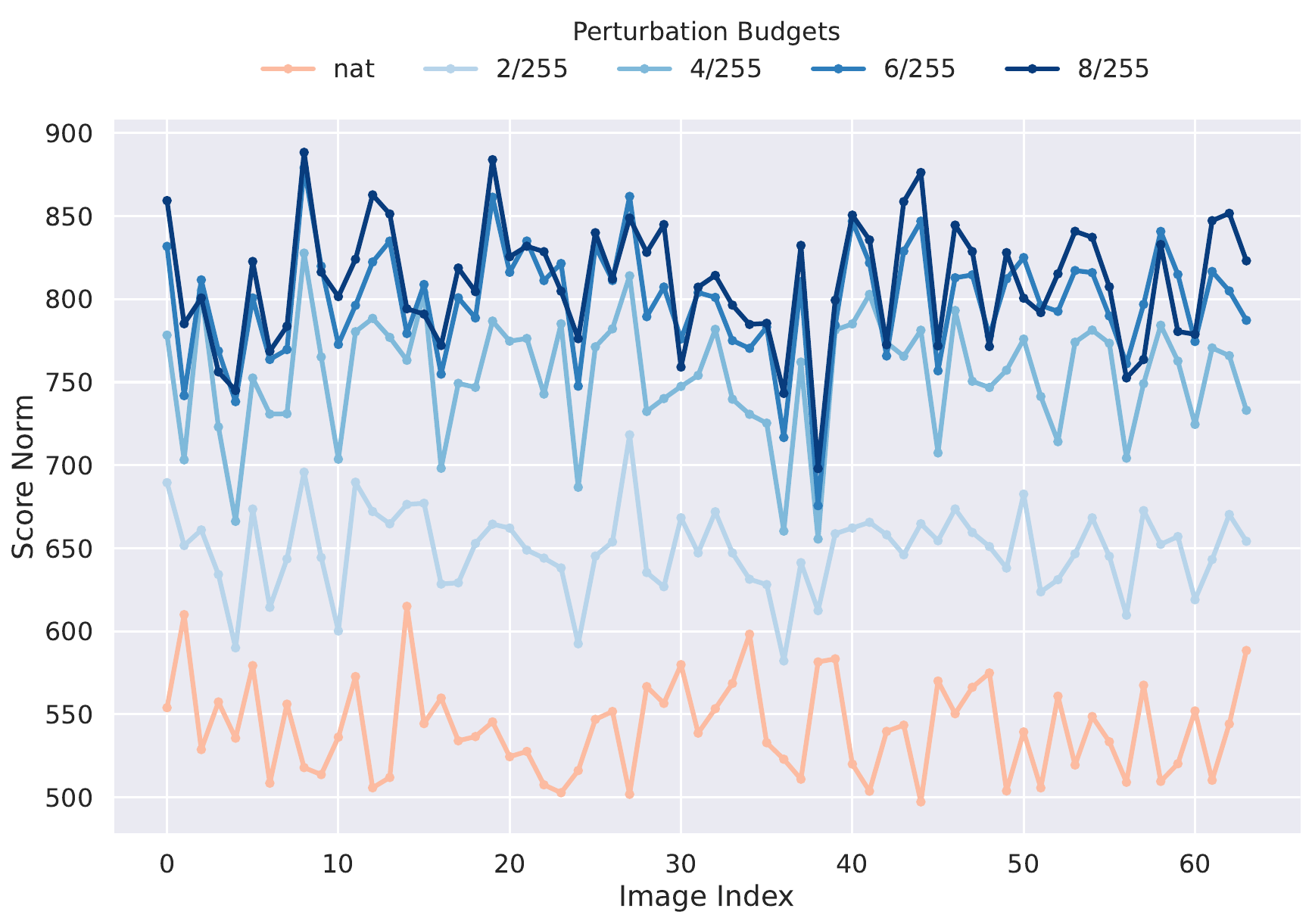}
    \caption{Relationship between score norms and perturbation budgets. We use one batch of clean data from \emph{CIFAR-10} and employ PGD+EOT $\ell_\infty (\epsilon = 8/255)$ as the attack.}
    \label{fig: eps norm with different perturbations}
\end{figure}

\section{Motivation}
\label{Sec: motivation}
We now elaborate on the motivation of our method by connecting the impact of different perturbation budgets $\epsilon$ to the required noise level $t^{*}$ of each sample through score norm.

\textbf{Sample-shared noise-level $t^*$ fails to address diverse adversarial perturbations.}
We empirically observe that an optimal noise level $t^*$ for each sample indeed could be different:
While a constant noise level $t^* = 100$, as suggested by \citet{nie2022diffusion}, yields strong performances for certain samples, it leads to \emph{suboptimal} results for others (from Figure~\ref{fig: motivation}). 
Since DBP relies on adequate $t^{*}$ for forward noise injection to remove adversarial perturbation, a shared $t^*$ cannot adapt to the distinct adversarial perturbations of individual examples, leading to a suboptimal accuracy-robustness trade-off.
Specifically, Figure \ref{fig: motivation-3} shows that $t^* = 100$ is insufficient to remove the adversarial noise, leaving residual perturbations that compromise robustness.
In contrast, $t^* = 100$ overly suppresses the other sample's semantic information during the forward process, making it difficult to recover the original semantics via the reverse process (Figure \ref{fig: motivation-2}-\ref{fig: motivation-1}).
These findings highlight the need for \emph{sample-specific noise injection levels tailored to individual adversarial perturbations}.

\begin{figure*}[t]
    \centering
    \includegraphics[width=\linewidth]{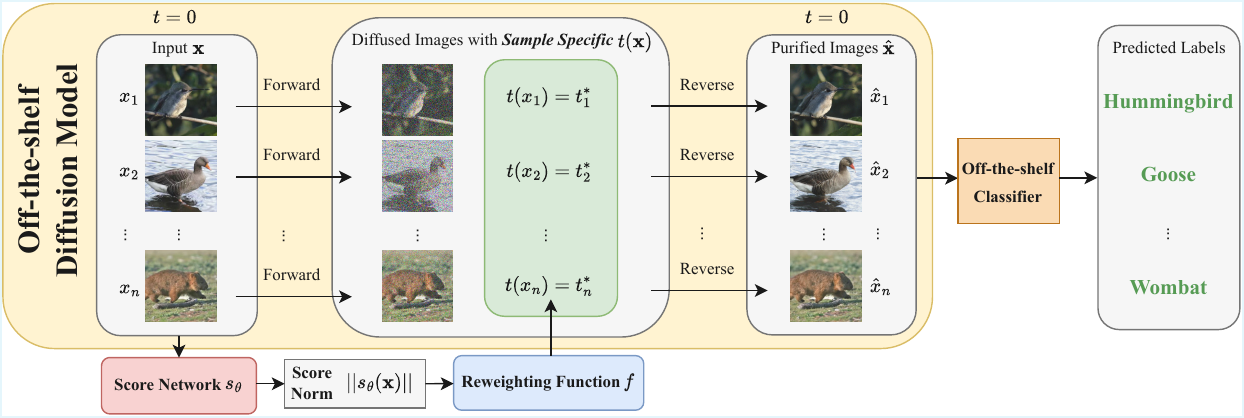}
    \caption{An overview of the proposed SSNI framework. SSNI introduces a novel sample-specific mechanism to adaptively adjust the noise injection level for each sample, enhancing purification effectiveness in DBP methods. The process begins by forwarding each sample image $x_i$ through the forward diffusion process using an off-the-shelf diffusion model. To determine how much noise to inject into each sample, SSNI employs a pre-trained score network $s_\theta$ to compute the score norm $\|s_\theta(x_i)\|$, which reflects the distance of the sample from the clean data distribution. Based on this score norm, a reweighting function $f$ adaptively determines the optimal noise level $t^*_i$ for each sample. Finally, each sample is purified through a reverse diffusion process before being classified. Notably, SSNI is designed as a general framework rather than a specific method, which can be seamlessly integrated with a wide range of existing DBP methods.}
    \label{fig:pipeline}
\end{figure*}

\textbf{Score norms vary across perturbation budgets.}
Adversarial and clean examples are from distinct distributions \citep{gao2021maximum}.
Motivated by this, we further investigate how different perturbation budgets $\epsilon$ affect score norms under adversarial attacks (Figure~\ref{fig: eps norm with different perturbations}).
Specifically, we compute the score norm of different samples undergoing PGD+EOT $\ell_\infty (\epsilon = 8/255)$ with perturbation budgets varying between $0$ and $8/255$ on CIFAR-10. 
We observe a consistent pattern:
score norms scale directly with perturbation strength: larger $\epsilon$ values lead to higher norms, whereas smaller perturbations (i.e., cleaner samples) lead to lower norms
The findings extend the role of score norms from differentiating adversarial/clean samples~\citep{yoon2021adversarial} to \emph{differentiate adversarial examples based on their perturbation strength}.

\textbf{Different score norms imply sample-specific $t^{*}$.}
Higher score norms signal greater deviation from the clean data, often caused by larger $\epsilon$.
Intuitively, \emph{samples with elevated score norms demand higher} $t^{*}$ (i.e., more aggressive noise injection) to remove adversarial patterns.
This dependency creates a link: score norms can act as \emph{proxies} for estimating the optimal sample-specific $t^*$. 
Doing so successfully leads to sufficient purification for adversarial examples while preserving fidelity for cleaner inputs.

\section{Sample-specific Score-aware Noise Injection}\label{Sec: method}
Motivated by Section \ref{Sec: motivation}, we propose \emph{\textbf{S}ample-specific \textbf{S}core-aware \textbf{N}oise \textbf{I}njection} (SSNI), a generalized DBP framework that adaptively adjusts the noise level for each sample based on how much it deviates from the clean data distribution, measured by the score norm.
We begin by introducing the SSNI framework, followed by a connection with existing DBP methods and the empirical realization of SSNI.

\subsection{Framework of SSNI}
\label{sec: framework of SSNI}
\textbf{Overview}.
SSNI builds upon existing DBP methods by \emph{reweighting} the optimal noise level $t^*$ from a global, sample-shared constant to a sample-specific quantity. 
At its core, SSNI leverages score norms to modulate the noise injected into each sample during diffusion, ensuring a more targeted denoising process tailored to each sample.
We visually illustrate SSNI in Figure \ref{fig:pipeline}, and describe the algorithmic workflow in Algorithm \ref{alg1:SSNI}.

\textbf{DBP with sample-shared noise level $t^*$}. 
Existing DBP methods use an off-the-shelf diffusion model for data purification, and a classifier responsible for label prediction.
Let $\gY$ be the label space for the classification task.
Denote the forward diffusion process by $D: \gX \to \gX$, the reverse process by $R: \gX \to \gX$, and the classifier by $C: \gX \to \gY$.
For classification, each adversarial sample goes through
\begin{equation}
    h(\rvx) = C \circ R \circ D (\rvx), \quad \text{with } \rvx = \rvx_0.
\end{equation}
In this context, $\rvx_T = D(\rvx_0)$ refers to the noisy image obtained after $T$ steps of diffusion, and $\hat{\rvx}_0 = R(\rvx_T)$ represents the corresponding recovered images through the reverse process.
Specifically, these methods predetermine a {\it constant} noise level $t^*$ for all samples, following a shared noise schedule $\{ \beta_t \}_{t \in [0, T]}$.
The outcome of the forward process defined in \eqref{eq: forward} can be expressed as:
\begin{equation*}
    \rvx_T = \sqrt{\prod\nolimits^{t^*}_{i =1}(1 - \beta_i)}\rvx + \sqrt{1 - \prod\nolimits^{t^*}_{i =1}(1 - \beta_i)}\mathbf{\boldsymbol{\epsilon}},
\end{equation*}
for $\rvx=\rvx_0, \, \forall \rvx \in \gX$,
where $\rvx_0$ represents the original data, and $\boldsymbol{\epsilon} \sim \mathcal{N}(\mathbf{0}, \mathbf{I})$ denotes the Gaussian noise.

\textbf{From sample-shared to sample-specific noise level}.
SSNI takes a step further by transforming the sample-shared noise level $T_{\rm SH}(\rvx) = t^*$ into a \emph{sample-specific} noise level $T_{\rm SI}(\rvx) = t(\rvx)$, computed through
\begin{equation}\label{eq: sample specific noise level}
    t(\rvx) = f(\left\|s_\theta(\rvx, t^{\rm S})\right\|,~t^*),
\end{equation}
using a \emph{pre-trained} score network $\theta$,
where $s_\theta(\rvx, t^{\rm S})$ is the score evaluated at an \emph{arbitrary} reference noise level $t^{\rm S}$, and $f(\cdot, \cdot)$ is a reweighting function (detailed in Section~\ref{sec: realization of SSNI}) that adjusts $t^*$ based on the score norm.
The forward process of SSNI then takes the form:
\begin{equation}
\label{eq: sample-specific noise infusion}
    \rvx_{t(\rvx)} = \sqrt{\prod\nolimits^{t(\rvx)}_{i =1}(1 - \beta_i)}\rvx + \sqrt{1 - \prod\nolimits^{t(\rvx)}_{i =1}(1 - \beta_i)}\mathbf{\boldsymbol{\epsilon}},
\end{equation}
for $\rvx=\rvx_0, \, \forall \rvx \in \gX$.
In this way, SSNI establishes a link between the sample's deviation from the clean data and the intensity of noise injected during diffusion.

\subsection{Unifying Sample-shared and Sample-specific DBP}\label{sec: unifying SSNI and Sample-shared DBP}
We define a generalized purification operator encompassing both sample-shared and sample-specific noise based DBP methods as $\Phi(\rvx) = R(\rvx_{T(\rvx)})$, where $R$ denotes the reverse process, $\rvx_{T(\rvx)}$ is the noisy version of $\rvx$ after $T(\rvx)$ steps of diffusion, and $T: \gX \to \gT$ is a function that determines the noise level for each input, with $\gT = [0, T_{\rm max}]$ being the range of possible noise levels.
Based on $\Phi(\cdot)$, we derive the following understandings (justifications are in Appendix~\ref{A: justification}).

\textbf{Sample-shared DBP is a special case of SSNI}.
The noise level of a sample-shared DBP, denoted by $T_{\rm SH}(\rvx) \triangleq t^*$, is a constant for $\forall \rvx \in \gX$, while for SSNI, we have: $T_{\rm SI}(\rvx) \triangleq t(\rvx) = f(\left\|s_\theta(\rvx, t^{\rm S})\right\|, t^*)$.
Clearly, any $T_{\rm SH}(\rvx)$ can be expressed by $T_{\rm SI}(\rvx)$, implying that any sample-shared noise level $t^*$ is equivalently represented by SSNI with a constant reweighting function.

\textbf{SSNI has higher purification flexibility}.
To compare the purification capabilities of different DBP strategies, we introduce the purification range $\Omega$. 
For an input $\rvx \in \gX$, we define $\Omega(\rvx) = \left\{ \Phi(\rvx) \mid \Phi(\rvx) = R(\rvx_{\tau(\rvx)}), \tau: \gX \to \gT \right\}$, which characterizes all possible purified outputs that a DBP strategy can generate for $\rvx$ when using different noise levels $\tau(\rvx)$.
We find that $\Omega_{\rm SH} \subseteq \Omega_{\rm SI}$ holds for any $\rvx \in \gX$, and there exists at least one $\rvx \in \gX$ for which the inclusion is strict, i.e., $\Omega_{\rm SH} \subsetneq \Omega_{\rm SI}$.
These results show that SSNI expands the space of possible purified outputs beyond what sample-shared DBP can achieve, thus enabling {\it greater flexibility} in the purification process.


\begin{algorithm}[t]
	\renewcommand{\algorithmicrequire}{\textbf{Input:}}
	\renewcommand{\algorithmicensure}{\textbf{Output:}}
	\caption{Diffusion-based Purification with SSNI.}
	\label{alg1:SSNI}
	\begin{algorithmic}[1]
		\REQUIRE test samples $\mathbf{x}$, a score network $s_{\theta}$, a reweighting function $f(\cdot)$, a noise level $t^{\rm S}$ for score evaluation, and a pre-determined noise level $t^*$.
            \STATE Approximate the score by $s_\theta$: $s_\theta(\rvx, t^{\rm S})$
            \STATE Obtain the sample-specific noise level: $t(\rvx) = f(\left\|s_\theta(\rvx, t^{\rm S})\right\|, t^*)$
            \STATE Execute forward diffusion process $\rvx_{t(\rvx)} \gets$ Eq.~(\ref{eq: sample-specific noise infusion})
            \FOR{$\rvt = t(\rvx), \dots, \textbf{1}$}
            \STATE Execute Reserve diffusion process $\hat{\rvx}_{t-1} \gets $ Eq.~(\ref{eq: reverse})
            \ENDFOR
		\STATE \textbf{return} purified samples $\hat{\mathbf{x}}$
	\end{algorithmic}
\end{algorithm}
\subsection{Realization of SSNI}
\label{sec: realization of SSNI}

We now detail the empirical realizations of SSNI. 

\textbf{Realization of the score.}
A challenge to obtaining sample-specific noise levels lies in the score network's dependency on a score-evaluation noise level as input.
Estimating scores of different adversarial samples at a single fixed $t^{\rm S}$ introduces sensitivity: scores computed at different reference levels (e.g., $t^{\rm S}_i \neq t^{\rm S}_j)$ yield inconsistent norms, biasing the selection of $t^*$.
Crucially, the ``true'' optimal $t^{\rm S}$ for each sample is unknown a priori, which leads to a circular dependency between reference noise level selection and score estimation.
To alleviate this, we leverage \emph{expected perturbation score} (EPS), which aggregates scores of perturbed samples across a spectrum of noise levels rather than relying on a single $t^{\rm S}$.
This integration reduces sensitivity to individual $t^{\rm S}$ choices~\citep{zhang2023detecting}.
EPS is defined as
\begin{equation}
    {\rm EPS}(\rvx) = \E_{t \sim U(0, t^{\rm S})} \nabla_{\rvx} \log p_t(\rvx),
    \label{eq: EPS definition}
\end{equation}
where $p_t(\rvx)$ is the marginal probability density and $t^{\rm S}$ is the maximum noise level for EPS. EPS computes the expectation of the scores of perturbed images across different noise levels $t \sim U(0, t^{\rm S})$, making it more robust to the changes in noise levels.
Notably, this $t^{\rm S}$ is \textit{different} from the sample-shared noise injection level $t^*$.
We will omit $t^{\rm S}$ from the notation of ${\rm EPS}(\rvx)$ for brevity hereafter.
Following \citet{zhang2023detecting}, we set $t^{\rm S} = 20$.
In practice, a score $\nabla_\rvx \log_{p_t}(\rvx)$ can be approximated by a score network. 
Specifically, we employ a score network pre-trained using the score matching objective \citep{song2019generative}. 

\textbf{Realization of the linear reweighting function.} We first design a linear function to reweight $t^*$:
\begin{equation}
    f_{\text{linear}}(\left\|\text{EPS}(\rvx)\right\|,~t^*) = \frac{\left\|\text{EPS}(\rvx)\right\| - \xi_{\min}}{\xi_{\max} - \xi_{\min}} \times t^* + b,
    \label{eq: linear reweighting function}
\end{equation}
where $b$ is a bias term and $t^*$ denotes the optimal sample-shared noise level selected by \citet{nie2022diffusion}.
To implement this reweighting function, we extract 5,000 validation clean examples from the training data (denoted as  $\rvx_v$) and we use $\left\|\text{EPS}(\rvx_v)\right\|$ as a \emph{reference} to indicate the approximate EPS norm values of clean data, which can help us reweight $t^*$. 
Then we define $\xi_{\min} = \min(\left\|\text{EPS}(\rvx)\right\|,~\left\|\text{EPS}(\rvx_v)\right\|)$ and $\xi_{\max} = \max(\left\|\text{EPS}(\rvx)\right\|,~\left\|\text{EPS}(\rvx_v)\right\|)$
to normalize $\left\|\text{EPS}(\rvx)\right\|$ such that the coefficient of $t^*$ is within a range of $[0, 1]$, ensuring that the reweighted $t^*$ stays positive and avoids unbounded growth, thus preserving the semantic information.

\begin{table*}[t]
    \centering
    \caption{Standard and robust accuracy of DBP methods against adaptive white-box PGD+EOT (left: $\ell_\infty (\epsilon = 8/255)$, right: $\ell_2 (\epsilon = 0.5)$) on \emph{CIFAR-10}. WideResNet-28-10 and WideResNet-70-16 are used as classifiers. We compare the result of DBP methods with and without \emph{SSNI-N}. We report mean and standard deviation over three runs. We show the most successful defense in \textbf{bold}. The performance improvements and degradation are reported in \textcolor{dg}{green} and \textcolor{dr}{red}.}
    \label{table:best_cifar10}
    \vspace{2pt}
    \begin{subtable}{.49\linewidth}
        \centering
        \resizebox{\textwidth}{!}{%
        \begin{tabular}{clcc}
        \toprule
        \multicolumn{4}{c}{PGD+EOT $\ell_\infty~(\epsilon = 8/255)$}\\
        \midrule
        & DBP Method & Standard & Robust \\ \midrule 
        \multirow{7}{*}{\rotatebox[origin=c]{90}{WRN-28-10}}
        & \citet{nie2022diffusion} & 89.71$\pm$0.72 & 47.98$\pm$0.64 \\
        & + \cellcolor{lg}{\emph{SSNI-N}} & \cellcolor{lg}{\bf{93.29$\pm$0.37 \textcolor{dg}{(+3.58)}}} & \cellcolor{lg}{\bf{48.63$\pm$0.56} \textcolor{dg}{(+0.65)}} \\
        \cmidrule{2-4}
        & \citet{wang2022guided} & 92.45$\pm$0.64 & 36.72$\pm$1.05 \\
        & + \cellcolor{lg}{\emph{SSNI-N}} & \cellcolor{lg}{\bf{94.08$\pm$0.33 \textcolor{dg}{(+1.63)}}} & \cellcolor{lg}{\bf{40.95$\pm$0.65 \textcolor{dg}{(+4.23)}}} \\
        \cmidrule{2-4}
        & \citet{lee2023robust} & 90.10$\pm$0.18 & 56.05$\pm$1.11 \\
        & \cellcolor{lg}{+ \emph{SSNI-N}} & \cellcolor{lg}{\bf{93.55$\pm$0.55 \textcolor{dg}{(+3.45)}}} & \cellcolor{lg}{\bf{56.45$\pm$0.28 \textcolor{dg}{(+0.40)}}} \\
        \midrule

           \multirow{7}{*}{\rotatebox[origin=c]{90}{WRN-70-16}} & \citet{nie2022diffusion} & 90.89$\pm$1.13 & 52.15$\pm$0.30 \\
        & \cellcolor{lg}{+ \emph{SSNI-N}} & \cellcolor{lg}{\bf{94.47$\pm$0.51 \textcolor{dg}{(+3.58)}}} & \cellcolor{lg}{\bf{52.47$\pm$0.66 \textcolor{dg}{(+0.32)}}} \\
        \cmidrule{2-4}
        & \citet{wang2022guided} & 93.10$\pm$0.51 & 43.55$\pm$0.58 \\
        & \cellcolor{lg}{+ \emph{SSNI-N}} & \cellcolor{lg}{\bf{95.57$\pm$0.24 \textcolor{dg}{(+2.47)}}} & \cellcolor{lg}{\bf{46.03$\pm$1.33 \textcolor{dg}{(+2.48)}}} \\
        \cmidrule{2-4}
        & \citet{lee2023robust} & 89.39$\pm$1.12 & 56.97$\pm$0.33 \\
        & \cellcolor{lg}{+ \emph{SSNI-N}} & \cellcolor{lg}{\bf{93.82$\pm$0.24 \textcolor{dg}{(+4.43)}}} & \cellcolor{lg}{\bf{57.03$\pm$0.28 \textcolor{dg}{(+0.06)}}} \\
        \bottomrule
        \end{tabular}
        }
    \end{subtable}
    \begin{subtable}{.49\linewidth}
        \centering
        \resizebox{\textwidth}{!}{%
        \begin{tabular}{clcc}
        \toprule
        \multicolumn{4}{c}{PGD+EOT $\ell_2~(\epsilon = 0.5)$}\\
        \midrule
        & DBP Method & Standard & Robust \\ \midrule 
        \multirow{7}{*}{\rotatebox[origin=c]{90}{WRN-28-10}} & \citet{nie2022diffusion} & 91.80$\pm$0.84 & \bf{82.81$\pm$0.97} \\
        & \cellcolor{lg}{+ \emph{SSNI-N}} & \cellcolor{lg}{\bf{93.95$\pm$0.70 \textcolor{dg}{(+2.15)}}} & \cellcolor{lg}{82.75$\pm$1.01 \textcolor{dr}{(-0.06)}} \\
        \cmidrule{2-4}
        & \citet{wang2022guided} & 92.45$\pm$0.64 & 82.29$\pm$0.82 \\
        & \cellcolor{lg}{+ \emph{SSNI-N}} & \cellcolor{lg}{\bf{94.08$\pm$0.33 \textcolor{dg}{(+1.63)}}} & \cellcolor{lg}{\bf{82.49$\pm$0.75 \textcolor{dg}{(+0.20)}}} \\
        \cmidrule{2-4}
        & \citet{lee2023robust} & 90.10$\pm$0.18 & 83.66$\pm$0.46 \\
        & \cellcolor{lg}{+ \emph{SSNI-N}} & \cellcolor{lg}{\bf{93.55$\pm$0.55 \textcolor{dg}{(+3.45)}}} & \cellcolor{lg}{\bf{84.05$\pm$0.33 \textcolor{dg}{(+0.39)}}} \\
        \midrule
        \multirow{7}{*}{\rotatebox[origin=c]{90}{WRN-70-16}} & \citet{nie2022diffusion} & 92.90$\pm$0.40 & 82.94$\pm$1.13 \\
        & \cellcolor{lg}{+ \emph{SSNI-N}} & \cellcolor{lg}{\bf{95.12$\pm$0.58 \textcolor{dg}{(+2.22)}}} & \cellcolor{lg}{\bf{84.38$\pm$0.58 \textcolor{dg}{(+1.44)}}} \\
        \cmidrule{2-4}
        & \citet{wang2022guided} & 93.10$\pm$0.51 & \bf{85.03$\pm$0.49} \\
        & \cellcolor{lg}{+ \emph{SSNI-N}} & \cellcolor{lg}{\bf{95.57$\pm$0.24 \textcolor{dg}{(+2.47)}}} & \cellcolor{lg}{84.64$\pm$0.51 \textcolor{dr}{(-0.39)}} \\
        \cmidrule{2-4}
        & \citet{lee2023robust} & 89.39$\pm$1.12 & 84.51$\pm$0.37 \\
        & \cellcolor{lg}{+ \emph{SSNI-N}} & \cellcolor{lg}{\bf{93.82$\pm$0.24 \textcolor{dg}{(+4.43)}}} & \cellcolor{lg}{\bf{84.83$\pm$0.33 \textcolor{dg}{(+0.32)}}} \\
        \bottomrule
        \end{tabular}
        }
    \end{subtable}
\end{table*}

\textbf{Realization of the non-linear reweighting function.} We then design a non-linear function based on the sigmoid function, which has two horizontal asymptotes:
\begin{equation}
    f_\sigma(\left\|\text{EPS}(\rvx)\right\|,~t^*) = \frac{t^* + b}{1 + \exp\{-(\left\|\text{EPS}(\rvx)\right\|-\mu)/\tau\}},
    \label{eq: non-linear reweighting function}
\end{equation}
where $b$ is a bias term and $t^*$ denotes the optimal sample-shared noise level selected by \citet{nie2022diffusion} and $\tau$ is a temperature coefficient that controls the sharpness of the function. We denote the mean value of $\left\|\text{EPS}(\rvx_v)\right\|$ as $\mu$. This ensures that when the difference between $\left\|\text{EPS}(\rvx)\right\|$ and $\mu$ is large, the reweighted $t^*$ can approach to the maximum $t^*$ in a more smooth way, and vice versa.

\textbf{Adding a bias term to the reweighting function.} One limitation of the above-mentioned reweighting functions is that \emph{the reweighted $t^*$ cannot exceed the original $t^*$}, which may result in some adversarial noise not being removed for some adversarial examples. To address this issue, we introduce an extra bias term (i.e., $b$) to the reweighting function, which can increase the upper bound of the reweighted $t^*$ so that the maximum possible reweighted $t^*$ can exceed original $t^*$. Empirically, we find that this can further improve the robust accuracy without compromising the clean accuracy.

\section{Experiments}
\label{Sec: experiments}
In this section, we use \emph{SSNI-L} to denote our method with the \emph{linear} reweighting function, and use \emph{SSNI-N} to denote our method with the \emph{non-linear} reweighting function. 

\subsection{Experimental Settings}
\label{Sec: experimental settings}

\textbf{Datasets and model architectures.} We consider two datasets for our evaluations: CIFAR-10 \citep{cifar}, and ImageNet-1K \citep{deng2009imagenet}.
For classifiers, we use the pre-trained WideResNet-28-10 and WideResNet-70-16 for CIFAR-10, and the pre-trained ResNet-50 for ImageNet-1K.
For diffusion models, we employ two off-the-shelf diffusion models trained on CIFAR-10 and ImageNet-1K~\citep{song2021score,dhariwal2021diffusion}.

\textbf{Evaluation metrics.} For all experiments, we consider the standard accuracy (i.e., accuracy on clean examples) and robust accuracy (i.e., accuracy on adversarial examples) as the evaluation metrics.

\textbf{Baseline settings.} We use three well-known DBP methods as our baselines: \emph{DiffPure} \citep{nie2022diffusion}, \emph{GDMP} \citep{wang2022guided} and \emph{GNS} \citep{lee2023robust}. The detailed configurations can be found in Appendix~\ref{Defense Methods Configurations}. For the reverse process within diffusion models, we consider DDPM sampling method \citep{ho2020denoising} in the DBP methods.

\textbf{Evaluation settings for DBP baselines.} Following \citet{lee2023robust}, we use a fixed subset of 512 randomly sampled images for all evaluations due to high computational cost of applying adaptive white-box attacks to DBP methods.
\citet{lee2023robust} provide a robust evaluation framework for existing DBP methods and demonstrate that PGD+EOT \citep{madry2018towards, anish2018synthesizing} is the golden standard for DBP evaluations.
Therefore, following \citet{lee2023robust}, we mainly use adaptive white-box PGD+EOT attack with 200 PGD iterations for CIFAR-10 and 20 PGD iterations for ImageNet-1K.
We use 20 EOT iterations for all experiments to mitigate the stochasticity introduced by the diffusion models. 
As PGD is a gradient-based attack, we compute the gradients of the entire process from a surrogate process. 
The details of the surrogate process is explained in Appendix~\ref{Surrogate Process of Gradient Computation}.
We also evaluate DBP methods under adaptive BPDA+EOT attack \citep{athalye2018obfuscated}, which leverages an identity function to approximate the direct gradient rather than direct computing the gradient of the defense system.

\textbf{Evaluation settings for SSNI.} Since SSNI introduces an extra reweighting process than DBP baselines, we implicitly design two adaptive white-box attacks by considering the \emph{entire defense mechanism} of SSNI (i.e., adaptive white-box PGD+EOT attack and adaptive white-box BPDA+EOT attack).
\emph{To make a fair comparison, we evaluate SSNI on adaptive white-box attacks with the same configurations mentioned above}.  
The algorithmic descriptions for the adaptive white-box PGD+EOT attack and adaptive white-box BPDA+EOT attack is provided in Appendix \ref{A: adaptive pgd+eot} and \ref{BPDA+EOT Adaptive Attack}.
In addition, to evaluate the generalization and adaptability of SSNI to diverse adversarial attacks, we further include AutoAttack \citep{croce2020reliable}, DiffAttack \citep{chen2023diffusion} and the Diff-PGD attack \citep{xue2024diffusion} in Section \ref{Sec: additional attacks}. We set the iteration number to 5 for Diff-PGD.

\subsection{Defending Against Adaptive White-box PGD+EOT}
\label{Sec: performance evaluation}
We mainly present and analyze the evaluation results of \emph{SSNI-N} in this section and the experimental results of \emph{SSNI-L} can be found in Appendix~\ref{Sec: Performance Evaluation of SSNI-L}.

\begin{table}[t]
\centering
\caption{Standard and robust accuracy (\%) against adaptive white-box PGD+EOT $\ell_\infty (\epsilon = 4/255)$ on \emph{ImageNet-1K}.}
\label{table:linf_imagenet_resnet50}
\vspace{2pt}
\centering
\resizebox{\linewidth}{!}{%
\begin{tabular}{clcc}
\toprule
\multicolumn{4}{c}{PGD+EOT $\ell_\infty~(\epsilon = 4/255)$}\\
\midrule
& DBP Method & Standard & Robust \\ 
\midrule
\multirow{7}{*}{\rotatebox[origin=c]{90}{RN-50}} &
\citet{nie2022diffusion} & 68.23$\pm$0.92 & 30.34$\pm$0.72 \\
& \cellcolor{lg}{+ \emph{SSNI-N}} & \cellcolor{lg}{\bf{70.25$\pm$0.56 \textcolor{dg}{(+2.02)}}} & \cellcolor{lg}{\bf{33.66$\pm$1.04 \textcolor{dg}{(+3.32)}}} \\
\cmidrule{2-4}
& \citet{wang2022guided} & 74.22$\pm$0.12 & 0.39$\pm$0.03 \\
& \cellcolor{lg}{+ \emph{SSNI-N}} & \cellcolor{lg}{\bf{75.07$\pm$0.18 \textcolor{dg}{(+0.85)}}} & \cellcolor{lg}{\bf{5.21$\pm$0.24 \textcolor{dg}{(+4.82)}}} \\
\cmidrule{2-4}
& \citet{lee2023robust} & 70.18$\pm$0.60 & 42.45$\pm$0.92 \\
& \cellcolor{lg}{+ \emph{SSNI-N}} & \cellcolor{lg}{\bf{72.69$\pm$0.80 \textcolor{dg}{(+2.51)}}} & \cellcolor{lg}{\bf{43.48$\pm$0.25 \textcolor{dg}{(+1.03)}}} \\
\bottomrule
\end{tabular}
}
\end{table}

\begin{table}[t]
\centering
\caption{Standard and robust accuracy (\%) against adaptive white-box BPDA+EOT $\ell_\infty (\epsilon = 8/255)$ attack on \emph{CIFAR-10}.}
\label{table:bpda2810}
\vspace{2pt}
\resizebox{\linewidth}{!}{%
\begin{tabular}{clcc}
\toprule
\multicolumn{4}{c}{BPDA+EOT $\ell_\infty~(\epsilon = 8/255)$}\\
\midrule
& DBP Method & Standard & Robust \\
\midrule
\multirow{7}{*}{\rotatebox[origin=c]{90}{WRN-28-10}} & \citet{nie2022diffusion} & 89.71$\pm$0.72&81.90$\pm$0.49\\
& \cellcolor{lg}{+ \emph{SSNI-N}} & \cellcolor{lg}{\bf{93.29$\pm$0.37 \textcolor{dg}{(+3.58)}}} & \cellcolor{lg}{\bf{82.10$\pm$1.15 \textcolor{dg}{(+0.20)}}} \\
\cmidrule{2-4}
& \citet{wang2022guided} & 92.45$\pm$0.64 & 79.88$\pm$0.89 \\
& \cellcolor{lg}{+ \emph{SSNI-N}} & \cellcolor{lg}{\bf{94.08$\pm$0.33 \textcolor{dg}{(+1.63)}}} & \cellcolor{lg}{\bf{80.99$\pm$1.09 \textcolor{dg}{(+1.11)}}} \\
\cmidrule{2-4}
& \citet{lee2023robust} & 90.10$\pm$0.18 & \bf{88.40$\pm$0.88} \\
& \cellcolor{lg}{+ \emph{SSNI-N}} & \cellcolor{lg}{\bf{93.55$\pm$0.55 \textcolor{dg}{(+3.45)}}} & \cellcolor{lg}{87.30$\pm$0.42 \textcolor{dr}{(-1.10)}} \\
\bottomrule
\end{tabular}
}
\end{table}

\begin{table*}[t]
\centering
\caption{Standard and robust accuracy (\%) against AutoAttack (random version), DiffAttack and Diff-PGD attack with $\ell_\infty (\epsilon = 8/255)$ on \emph{CIFAR-10}. We report mean and standard deviation over three runs. We show the most successful defense in \textbf{bold}.}
\vspace{2pt}
\label{table:multi-attack-2810}
\resizebox{0.8\linewidth}{!}{%
\begin{tabular}{clcccc}
\toprule
\multicolumn{6}{c}{$\ell_\infty~(\epsilon = 8/255)$}\\
\midrule
& DBP Method & Standard & AutoAttack & DiffAttack & Diff-PGD \\
\midrule
\multirow{7}{*}{\rotatebox[origin=c]{90}{WRN-28-10}} & \citet{nie2022diffusion} & 89.71$\pm$0.72& 66.73$\pm$0.21 & 47.16$\pm$0.48 & 54.95$\pm$0.77\\
& \cellcolor{lg}{+ \emph{SSNI-N}} & \cellcolor{lg}{\bf{93.29$\pm$0.37 \textcolor{dg}{(+3.58)}}} & \cellcolor{lg}{\bf{66.94$\pm$0.44 \textcolor{dg}{(+0.21)}}} & \cellcolor{lg}{\bf{48.15$\pm$0.22 \textcolor{dg}{(+0.99)}}} & \cellcolor{lg}{\bf{56.10$\pm$0.35 \textcolor{dg}{(+1.15)}}}\\
\cmidrule{2-6}
& \citet{wang2022guided} & 92.45$\pm$0.64 & 64.48$\pm$0.62 & 54.27$\pm$0.72 & 41.45$\pm$0.60\\
& \cellcolor{lg}{+ \emph{SSNI-N}} & \cellcolor{lg}{\bf{94.08$\pm$0.33 \textcolor{dg}{(+1.63)}}} & \cellcolor{lg}{\bf{66.53$\pm$0.46 \textcolor{dg}{(+2.05)}}} & \cellcolor{lg}{\bf{55.81$\pm$0.33 \textcolor{dg}{(+1.54)}}} & \cellcolor{lg}{\bf{42.91$\pm$0.56 \textcolor{dg}{(+1.46)}}}\\
\cmidrule{2-6}
& \citet{lee2023robust} & 90.10$\pm$0.18 & 69.92$\pm$0.30 
 & 56.04$\pm$0.58 & 59.02$\pm$0.28\\
& \cellcolor{lg}{+ \emph{SSNI-N}} & \cellcolor{lg}{\bf{93.55$\pm$0.55 \textcolor{dg}{(+3.45)}}} & \cellcolor{lg}{\bf{72.27$\pm$0.19 \textcolor{dg}{(+2.35)}}} & \cellcolor{lg}{\bf{56.80$\pm$0.41 \textcolor{dg}{(+0.76)}}} & \cellcolor{lg}{\bf{61.43$\pm$0.58 \textcolor{dg}{(+2.41)}}}\\
\bottomrule
\end{tabular}
}
\end{table*}
\begin{table}[t]
\centering
\caption{Ablation study on different sampling methods during the reverse diffusion process. We measure the standard and robust accuracy (\%) against PGD+EOT $\ell_\infty (\epsilon = 8/255)$ on \emph{CIFAR-10}. We use \emph{DiffPure} as the baseline method and we set $t^*=100$. WideResNet-28-10 is used as the classifier. We report mean and the standard deviations over three runs.}
\vspace{2pt}
\label{table:sampling methods}
\footnotesize
\begin{tabular}{ccc}
\toprule
Sampling Method & Standard & Robust \\
\midrule
sdeint solver &89.06$\pm$0.48&47.72$\pm$0.24\\
DDPM&89.71$\pm$0.72&47.98$\pm$0.64\\
DDIM &91.54$\pm$0.72&37.50$\pm$0.80\\
\bottomrule
\end{tabular}
\end{table}

\textbf{Result analysis on CIFAR-10.} Table \ref{table:best_cifar10} shows the standard and robust accuracy against PGD+EOT $\ell_{\infty}(\epsilon=8/255)$ and $\ell_{2}(\epsilon=0.5)$ threat models on CIFAR-10, respectively.
Notably, \emph{SSNI-N} effectively improves the accuracy-robustness trade-off on PGD+EOT $\ell_{\infty}(\epsilon=8/255)$ compared to DBP baselines.
Specifically, \emph{SSNI-N} improves standard accuracy of \emph{DiffPure} by $3.58\%$ on WideResNet-28-10 and WideResNet-70-16 without compromising robust accuracy.
For \emph{GDMP}, the standard accuracy grows by $1.63\%$ on WideResNet-28-10 and by $2.47\%$ on WideResNet-70-16, respectively.
Notably, \emph{SSNI-N} improves the robust accuracy of \emph{GDMP} by $4.23\%$ on WideResNet-28-10 and by $2.48\%$ on WideResNet-70-16.
For \emph{GNS}, both the standard accuracy and robust accuracy are improved by a notable margin.
We can observe a similar trend in PGD+EOT $\ell_{2}(\epsilon=0.5)$. Despite some decreases in robust accuracy (e.g., 0.06\% on \emph{DiffPure} and 0.39\% on \emph{GDMP}), \emph{SSNI-N} can improve standard accuracy by a notable margin, and thus improving accuracy-robustness trade-off.

\textbf{Result analysis on ImageNet-1K.} Table~\ref{table:linf_imagenet_resnet50} presents the evaluation results against adaptive white-box PGD+EOT $\ell_{\infty} (\epsilon = 4/255)$ on ImageNet-1K.
\emph{SSNI-N} outperforms all baseline methods by notably improving both the standard and robust accuracy, which demonstrates the effectiveness of \emph{SSNI-N} in defending against strong white-box adaptive attack and indicates the strong scalability of SSNI on large-scale datasets such as ImageNet-1K.

\subsection{Defending Against Adaptive White-box BPDA+EOT}
\label{Sec: Defense Against BPDA+EOT}
We mainly present and analyze the evaluation results of \emph{SSNI-N} in this section and the experimental results of \emph{SSNI-L} can be found in Appendix~\ref{Sec: Performance Evaluation of SSNI-L}.

We further evaluate the performance of \emph{SSNI-N} against adaptive white-box BPDA+EOT $\ell_{\infty}(\epsilon=8/255)$, which is an adaptive attack specifically designed for DBP methods \citep{tramer2020on,hill2021stochastic}, as demonstrated in Table~\ref{table:bpda2810}. Specifically, incorporating \emph{SSNI-N} with \emph{DiffPure} can further improve the standard accuracy by 3.58\% without compromising robust accuracy.
Notably, incorporating \emph{SSNI-N} with \emph{GDMP} can improve the standard and robust accuracy \emph{simultaneously} by a large margin.
Despite some decreases in robust accuracy when incorporating \emph{SSNI-N} with \emph{GNS} (i.e., 1.10\%), \emph{SSNI-N} can improve standard accuracy significantly (i.e., 3.45\%), and thus improving the accuracy-robustness trade-off by a notable margin.

\vspace{-5pt}
\subsection{Defending Against Additional Attacks}
\label{Sec: additional attacks}
Table~\ref{table:multi-attack-2810} reports the standard and robust accuracy of various DBP methods on CIFAR-10 under three additional white-box attacks: AutoAttack, DiffAttack, and Diff-PGD, all under the $\ell_{\infty}(\epsilon = 8/255)$ threat model.
For the \emph{DiffPure}, \emph{SSNI-N} improves standard accuracy by 3.58\% and brings robustness gains under AutoAttack (i.e., 0.21\%), DiffAttack (i.e., 0.99\%), and Diff-PGD (i.e., 1.15\%).
For \emph{GDMP}, \emph{SSNI-N} increases standard accuracy by 1.63\%, and achieves significant robustness improvements under AutoAttack (i.e., 2.05\%) and DiffAttack (i.e., 1.54\%), with a slight gain under Diff-PGD (i.e., 1.46\%).
Regarding \emph{GNS}, \emph{SSNI-N} boosts the standard accuracy by 3.45\%, and the robust accuracy improves under AutoAttack (i.e., 2.35\%), DiffAttack (i.e., 1.70\%), and Diff-PGD (i.e., 2.41\%), respectively.
Overall, \emph{SSNI-N} consistently enhances both standard and robust accuracy across all three attack types, demonstrating its strong generalization and adaptability to diverse adversarial threats.

\subsection{Ablation Study}
\label{Sec: ablation study}

\textbf{Ablation study on $\tau$ in \emph{SSNI-N}.} We investigate how the temperature coefficient $\tau$ in \eqref{eq: non-linear reweighting function} affects the performance of \emph{SSNI-N} against adaptive white-box PGD+EOT $\ell_{\infty}(\epsilon=8/255)$ attack on CIFAR-10 in Figure~\ref{fig: ablation linf}.
The temperature coefficient $\tau$ controls the sharpness of the curve of the \emph{non-linear} reweighting function.
A higher $\tau$ leads to a more smooth transition between the low and high values of the reweighting function, resulting in less sensitivity to the changes of the input.
From Figure~\ref{fig: ablation linf}, the standard accuracy remains stable across different $\tau$s, while the robust accuracy increases to the climax when $\tau = 20$.
Therefore, we choose $\tau^* = 20$ for the non-linear reweighting function to optimize the accuracy-robustness trade-off for DBP methods.

\begin{figure}[t]
\centering
\includegraphics[width=0.9\linewidth]{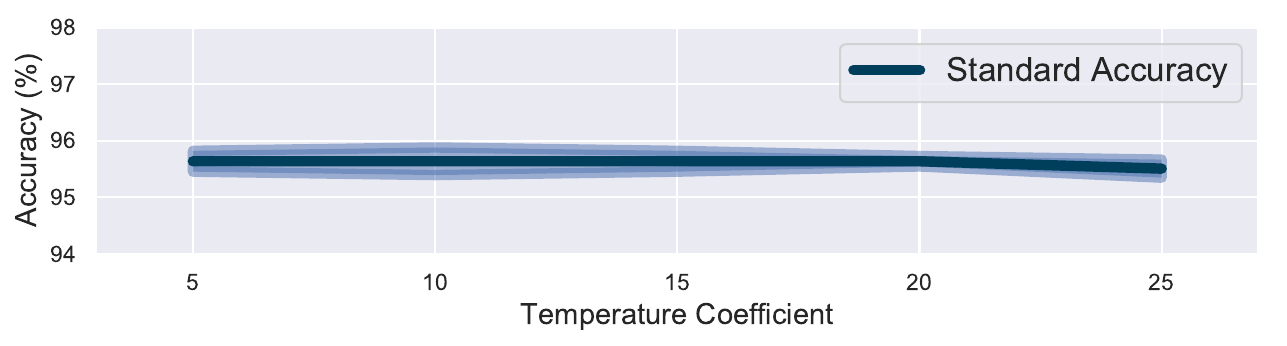}
\includegraphics[width=0.9\linewidth]{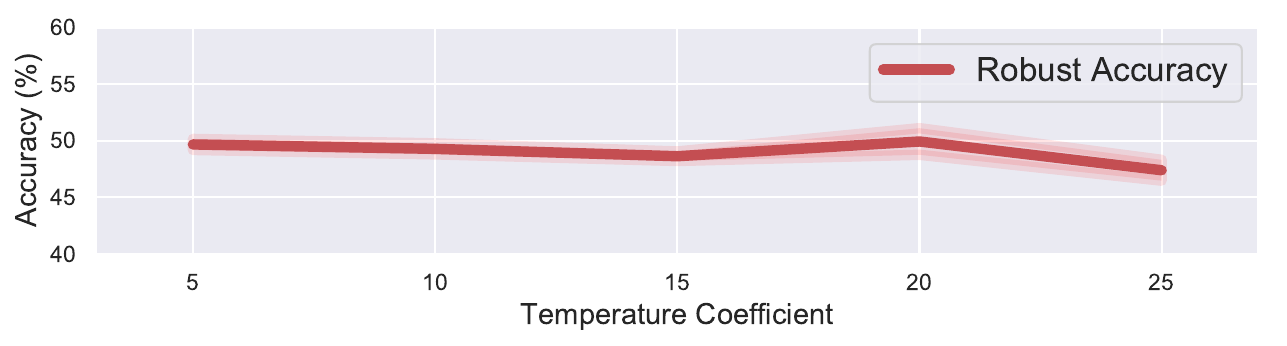}
\caption{Standard (\textbf{top}) and robust ({\bf bottom}) accuracy (\%) vs. $\tau$; We report mean and the standard deviations over three runs.}
\label{fig: ablation linf}
\end{figure}

\textbf{Ablation study on sampling methods.} 
\emph{DiffPure} originally used an adjoint method to efficiently compute the gradients of the system, but~\citet{lee2023robust} and \citet{chen2024robust} suggest to replace adjoint solver with sdeint solver for the purpose of computing full gradients more accurately~\citep{li2020scalable,kidger2021neuralsde}.
Therefore, we investigate whether using different sampling methods affect the performance of DBP methods (here we use \emph{DiffPure} as the baseline method).
We further compare the results with DDIM sampling method \citep{song2021denoising}, which is a faster sampling method than DDPM \citep{ho2020denoising}.
From Table \ref{table:sampling methods}, DDPM achieves the best accuracy-robustness trade-off among the three sampling methods, and thus we select DDPM as the sampling method for all baseline methods.
\begin{table*}[t]
\centering
\caption{Inference time of the DBP methods with and without SSNI for a single image running on one A100 GPU on \emph{CIFAR-10} and \emph{ImageNet-1K}. We use WideResNet-28-10 as the classifier for \emph{CIFAR-10} and ResNet-50 for \emph{ImageNet-1K}.}
\label{table: inference time cifar10}
\vspace{2pt}
\begin{subtable}{.45\linewidth}
\centering
\resizebox{\textwidth}{!}{%
\begin{tabular}{lcc}
\toprule
DBP Method & Noise Injection Method & Time (s)\\
\midrule
                               &-      & 3.934       \\
\citet{nie2022diffusion}       &SSNI-L & 4.473       \\
                               &SSNI-N & 4.474       \\
\midrule
                               & -     & 5.174       \\
\citet{wang2022guided}         &SSNI-L & 5.793       \\
                               &SSNI-N & 5.829       \\
\midrule
                               & -     & 14.902      \\
\citet{lee2023robust}          &SSNI-L & 15.624      \\
                               &SSNI-N & 15.534      \\
\bottomrule
\end{tabular}
}
\end{subtable}
\begin{subtable}{.45\linewidth}
\centering
\resizebox{\textwidth}{!}{%
\begin{tabular}{lcc}
\toprule
DBP Method & Noise Injection Method & Time (s) \\
\midrule
                               &-      & 8.980       \\
\citet{nie2022diffusion}       &SSNI-L & 14.515       \\
                               &SSNI-N & 14.437       \\
\midrule
                               & -     & 11.271       \\
\citet{wang2022guided}         &SSNI-L & 16.657       \\
                               &SSNI-N & 16.747       \\
\midrule
                               & -     & 35.091      \\
\citet{lee2023robust}          &SSNI-L & 40.526      \\
                               &SSNI-N & 40.633      \\
\bottomrule
\end{tabular}
}
\end{subtable}
\end{table*}

\textbf{Ablation study on score norms.} We investigate the effect of using single score norm (i.e., $\left\|\nabla_\rvx \log p_t(\rvx) \right\|$) for SSNI in Appendix \ref{A: score norm}. We find that although single score norm can notably improve the standard accuracy, it reduces robust accuracy. This might be attributed to the fact that single score norm is sensitive to the purification noise levels.

\textbf{Ablation study on the bias term $b$.} 
We investigate how the bias term $b$ in reweighting functions affects the performance of SSNI in Appendix \ref{A: bias}. We find that the selection of bias term will not significantly impact the performance of our framework under CIFAR-10 and ImageNet-1K. Note that when bias increases, there is a general observation that the clean accuracy drops and the robust accuracy increases. This perfectly aligns with the understanding of optimal noise level selections in existing DBP methods, where a large noise level would lead to a drop in both clean and robust accuracy and a small noise level cannot remove the adversarial perturbation effectively.

\textbf{Ablation study on model architectures.}
We also investigate how the choice of model architecture affects the performance of our method in Appendix~\ref{A: architecture}. Specifically, we evaluate \emph{SSNI-N} on a Swin-Transformer \citep{liu2021swin} under the PGD+EOT $\ell_{\infty}(\epsilon = 8/255)$ on \emph{CIFAR-10}. We find that the improvements brought by \emph{SSNI-N} are consistent with those observed on CNN-based models. In particular, \emph{SSNI-N} enhances both standard and robust accuracy across all evaluated DBP baselines. Notably, the relative improvement on robust accuracy is more significant for transformer-based classifiers. This observation suggests that \emph{SSNI-N} can effectively complement the inherent robustness of transformer models and generalizes well across different architectures.

\subsection{Compute Resource}
\label{sec: inference time}
The inference time (in seconds) for incorporating SSNI modules into existing DBP methods on CIFAR-10 and ImageNet-1K are reported in Tables \ref{table: inference time cifar10}. 
The inference time is measured as the time it takes for a single test image to complete the purification process.
Specifically, SSNI is approximately 0.5 seconds slower than baseline methods on CIFAR-10 and 5 seconds slower than baseline methods on ImageNet-1K.
Thus, compared with DBP baseline methods, this reweighting process is \emph{lightweight}, ensuring that SSNI is computationally feasible and can be applied in practice with minimal overhead.
We implemented our code on Python version 3.8, CUDA version 12.2.0, and PyTorch version 2.0.1 with Slurm Workload Manager. We conduct each of the experiments on up to 4 $\times$ NVIDIA A100 GPUs (see \url{https://github.com/tmlr-group/SSNI}).

\section{Limitation}

\textbf{Maximum level $t^{\rm S}$ for EPS.}
We use EPS to replace the single reference noise level with an integrated approach.
Still, the maximum level $t^{\rm S}$ defining the upper bound of the expectation range is shared across samples, which may not be the optimal choice though.
Refining sample-sensitive maximum levels could potentially improve the adaptability and robustness of EPS in different scenarios, and we leave it as future work.

\textbf{The design of reweighting functions.} 
The proposed reweighting functions (i.e., the linear and non-linear reweighting functions) may not be the optimal ones for SSNI. 
Further efforts are needed to explore data-driven or learning-based strategies to discover more effective function forms.
Meanwhile, designing an effective reweighting function is an open question, and we leave it as future work.

\textbf{Extra computational cost.} 
The integration of an extra reweighting process will inevitably bring some extra cost. Further efforts are needed to optimize the trade-off between accuracy gains and computational overhead.
Luckily, we find that this reweighting process is \emph{lightweight}, making SSNI computationally feasible compared to existing DBP methods (see Section \ref{sec: inference time}).

\section{Conclusion}
\label{Sec: Conclusion}
In this paper, we find that an optimal $t^*$ indeed could be different on a sample basis. Motivated by this finding, we propose a new framework called \emph{\textbf{S}ample-specific \textbf{S}core-aware \textbf{N}oise \textbf{I}njection} (SSNI). SSNI sample-wisely reweights $t^*$ for each sample based on its score norm, which generally injects less noise to clean samples and sufficient noise to adversarial samples, leading to a notable improvement in the accuracy-robustness trade-off. We hope this simple yet effective framework could open up a new perspective in DBP methods and lay the groundwork for future methods that account for sample-specific noise injections.

\section*{Acknowledgements}
JCZ is supported by the Melbourne Research Scholarship and would like to thank Shuhai Zhang and Huanran Chen for productive discussions.
ZSY is supported by the Australian Research Council (ARC) with grant number DE240101089.
FL is supported by the ARC with grant number DE240101089, LP240100101, DP230101540 and the NSF\&CSIRO Responsible AI program with grant number 2303037.
This research was supported by The University of Melbourne’s Research Computing Services and the Petascale Campus Initiative. Chaowei Xiao is supported by the Schmidt Sciences AI2050 fellow program.

\section*{Impact Statement}
This study on adversarial defense mechanisms raises important ethical considerations that we have carefully addressed.
We have taken steps to ensure our adversarial defense method is fair.
We use widely accepted public benchmark datasets to ensure comparability of our results.
Our evaluation encompasses a wide range of attack types and strengths to provide a comprehensive assessment of our defense mechanism. 
We have also carefully considered the broader impacts of our work.
The proposed defense algorithm contributes to the development of more robust machine learning models, potentially improving the reliability of AI systems in various applications.
We will actively engage with the research community to promote responsible development and use of adversarial defenses.

\bibliography{example_paper}
\bibliographystyle{icml2025}

\newpage
\appendix
\onecolumn

\section{Detailed Related Work}
\label{A: related work}

\textbf{Adversarial attack.} \emph{Adversarial examples} (AEs) have emerged as a critical security concern in the development of AI systems in recent years~\citep{szegedy2014intriguing, goodfellow2015explaining}.
These examples are typically generated by introducing imperceptible perturbations to clean inputs, which can cause a classifier to make incorrect predictions with high confidence. 
The methods used to generate such examples are known as \emph{adversarial attacks}. 
One of the earliest attack methods, the \emph{fast gradient sign method} (FGSM), which perturbs clean data in the direction of the gradient of the loss function~\citep{goodfellow2015explaining}. 
Building on this idea, \citet{madry2018towards} propose the \emph{projected gradient descent} (PGD) attack, which applies iterative gradient-based updates with random initialization.
\emph{AutoAttack} (AA)~\citep{croce2020reliable} integrate multiple attack strategies into a single ensemble, making it a widely adopted benchmark for evaluating adversarial robustness.
To address defenses that apply randomized input transformations, \citet{anish2018synthesizing} introduce the \emph{expectation over transformation} (EOT) framework for computing more accurate gradients. 
Additionally, \citet{athalye2018obfuscated} propose the \emph{backward pass differentiable approximation} (BPDA), which approximates gradients using identity mappings to circumvent gradient obfuscation defenses.
According to \citet{lee2023robust}, the combination of PGD and EOT, i.e., PGD+EOT, is currently considered the most effective attack strategy against DBP methods.
More recently, adversarial attacks that specifically designed for DBP methods are proposed.
For example, \citet{xue2024diffusion} propose \emph{diffusion-based projected gradient descent} (Diff-PGD), which leverages an off-the-shelf diffusion model to guide perturbation optimization, enabling the generation of more stealthy AEs. \citet{chen2023diffusion} propose \emph{DiffAttack}, which crafts perturbations in the latent space of diffusion models, rather than directly in pixel space, enabling the generation of human-insensitive yet semantically meaningful AEs through content-preserving structures.

\textbf{Adversarial defense.} To counter the threat posed by adversarial attacks, numerous defense strategies have been developed, including \emph{adversarial detection} (AD), \emph{adversarial training} (AT), and \emph{adversarial purification} (AP). 
\emph{(1) AD:} AD is the most lightweight approach to defending against adversarial attacks is to detect and remove AEs from the input data.
Earlier studies typically trained detectors tailored to specific classifiers or attack types, often overlooking the underlying data distribution, which limits their generalization to unseen attacks \citep{ma2018characterizing, lee2018a, raghuram2021a, pang2022two}.
Recently, \emph{statistical adversarial data detection} has attracted growing attention for its ability to address this limitation.
For instance, \citet{gao2021maximum} show that the \emph{maximum mean discrepancy} (MMD) is sensitive to adversarial perturbations, and use distributional discrepancies between AEs and CEs to effectively identify and filter out AEs, even under previously unseen attacks.
Building on this insight, \citet{zhang2023detecting} introduce a novel statistic called the \emph{expected perturbation score} (EPS), which quantifies the average score of a sample after applying multiple perturbations.
They then propose an EPS-based variant of MMD to capture the distributional differences between clean and adversarial examples more effectively. 
\emph{(2) AT:} Vanilla AT~\citep{madry2018towards} directly generates adversarial examples during training, encouraging the model to learn their underlying distribution.
Beyond vanilla AT, various extensions have been proposed to improve its effectiveness.
For instance, \citet{zhang2019theoretically} introduce a surrogate loss optimized based on theoretical robustness bounds.
Similarly, \citet{wang2020improving} examine the role of misclassified examples in shaping model robustness and enhance adversarial risk via regularization techniques.
From a reweighting perspective, \citet{zhang2021geometry} propose \emph{geometry-aware instance-reweighted AT} (GAIRAT), which adjusts sample weights based on their distance to the decision boundary.
Building on this, \citet{wang2021probabilistic} utilize probabilistic margins to reweight AEs in a continuous and path-independent manner.
More recently, \citet{zhang2024improving} suggest pixel-wise reweighting of AEs to explicitly direct attention toward critical image regions.
\emph{(3) AP: } AP typically employs generative models to transform AEs back into their clean counterparts before classification~\citep{liao2018defense, samangouei2018defensegan, song2018pixeldefend, naseer2020a, zhang2025ddad}.
Within this context, \emph{diffusion-based purification} (DBP) methods have emerged as a promising framework, exploiting the inherent denoising nature of diffusion models to filter out adversarial noise \citep{nie2022diffusion, wang2022guided, xiao2023densepure, lee2023robust}. 
DBP works with \emph{pre-trained} diffusion model by first corrupting an adversarial input through the forward process and then iteratively reconstructing it via the reverse process.
This projects the input back onto the clean data manifold while stripping away adversarial artifacts.
\citet{wang2022guided} introduce input guidance during the reverse diffusion process to ensure the purified outputs stay close to the inputs. 
\citet{lee2023robust} propose a fine-tuned gradual noise scheduling for multi-step purifications. 
\citet{bai2024diffusion} improve the reverse diffusion process by incorporating contrastive objectives.

\newpage
\section{Justification of Section~\ref{sec: unifying SSNI and Sample-shared DBP}}
\label{A: justification}

\textbf{Sampled-shared DBP is a special case of SSNI}.
\begin{proof}
    Let $\Phi_{\rm SH}$ be a sample-shared purification operator with constant noise level $t^*$.
    We can express $\Phi_{\rm SS}$ as a sample-specific purification operator $\Phi_{\rm SI}$ by defining the reweighting function $f$ as
    \begin{equation*}
        f(z, t^*) = t^* \quad \forall z \in \sR, t \in [0, T_{\rm max}].
    \end{equation*}
    Then, for any $\rvx \in \gX$, we have
    \begin{equation*}
        \begin{aligned}
            \Phi_{\rm SI}(\rvx) & = R(\rvx_{T(\rvx)}) \\
                                & = R(\rvx_{f(|| s_{\theta}(\rvx, t^*)||, t^*)}) \\
                                & = R(\rvx_{t^*}) \\
                                & = \Phi_{\rm SH}(\rvx).
        \end{aligned}
    \end{equation*}
\end{proof}

\textbf{SSNI has a Higher Purification Capacity}.

\noindent Statement 1~[Comparison of Purification Range]: For any $\rvx \in \gX$, we have $\Omega_{\rm SH} (\rvx)\subseteq \Omega_{\rm SI} (\rvx)$.
\begin{proof}
    Let $\rvy \in \Omega_{\rm SH}(\rvx)$. 
    Then $\exists t^* \in [0, T_{\rm max}]$ such that $\rvy = R(\rvx_{t^*})$.
    Define $f(z, t^*) = t^*$ for all $z$ and $t^*$, then $t(\rvx) = f(|| s_{\theta}(\rvx, t^*) ||, t^*) = t^*$.
    Therefore, $\rvy = R(\rvx_{t^*}) = R(\rvx_{t(\rvx)}) \in \Phi_{\rm SI}(\rvx)$.
    This completes the proof of $\Omega_{\rm SH} (\rvx)\subseteq \Omega_{\rm SI} (\rvx)$.
\end{proof}

\noindent Statement 2~[Strict Inclusion]: There exists $\rvx \in \gX$, we have $\Omega_{\rm SH} (\rvx)\subsetneq \Omega_{\rm SI} (\rvx)$.
\begin{proof}
    Consider a non-constant score function $s_{\theta}(\rvx, t)$ and a non-trivial reweighting function $f$. 
    We can choose $\rvx$ such that $t(\rvx) \neq t^*$ for any fixed $t^*$.
    Then $R(\rvx_{t(\rvx)}) \in \Phi_{\rm SI}(\rvx)$ but $R(\rvx_{t(\rvx)}) \neq \Phi_{\rm SH}$.
    This completes the proof of $\Omega_{\rm SH} (\rvx)\subsetneq \Omega_{\rm SI} (\rvx)$.
\end{proof}

\section{Defense Methods Configurations}
\label{Defense Methods Configurations}
For all chosen DBP methods, we utilize surrogate process to obtain gradients of the defense system during white-box adaptive attack, but we directly compute the full gradients during defense evaluation.
Furthermore, we consistently apply DDPM sampling method to the selected DBP methods, which means we replace the numeric SDE solver (\textbf{sdeint}) with DDPM sampling method in DiffPure~\citep{nie2022diffusion} and GDMP~\citep{wang2022guided}.
The reason is that the SDE solver does not support sample-specific timestep input.
For DDPM sampling, we can easily manipulate sample-specific timestep input by using matrix operation.
\subsection{DiffPure}
Existing DBP methods generally follow the algorithm of DiffPure~\citep{nie2022diffusion}. DiffPure conducts evaluation on AutoAttack~\citep{croce2020reliable} and BPDA+EOT adaptive attack~\citep{athalye2018obfuscated} to measure model robustness.
DiffPure chooses optimal $t^* = 100$ and $t^* = 75$ on CIFAR-10 against threat models $\ell_\infty (\epsilon = 8/255)$ and $\ell_2 (\epsilon = 0.5)$, respectively.
It also tests on high-resolution dataset like ImageNet-1K with $t^* = 150$ against threat models $\ell_\infty (\epsilon = 4/255)$.
Calculating exact full gradients of the defense system of DiffPure is impossible since one attack iteration requires 100 function calls (with $t^* = 100$ and a step size of 1).
DiffPure originally uses numerical SDE solver for calculating gradients.
However, the adjoint method is insufficient to measure the model robustness since it relies on the performance of the underlying SDE solver~\citep{zhuang2020adaptive,lee2023robust,chen2024robust}.
Therefore, we apply surrogate process to efficiently compute gradients of direct back-propagation in our evaluation.
To overcome memory constraint issue, we align the step size settings of denoising process in adversarial attack to 5 with the evaluation settings in~\citep{lee2023robust} and keep the timestep $t^*$ consistent with DiffPure. For ImageNet-1K evaluation, we can only afford a maximum of 10 function calls for one attack iteration. 

\subsection{GDMP}
GDMP basically follows the purification algorithm proposed in DiffPure~\citep{nie2022diffusion,wang2022guided}, but their method further introduces a novel guidance and use multiple purification steps sequentially.
GDMP proposes to use gradients of a distance between an initial input and a target being processed to preserve semantic information, shown in \eqref{eq: gdmp}.
\begin{equation}
    \rvx_{t-1} \sim \mathcal{N}(\boldsymbol{\mu}_\theta - s\boldsymbol{\Sigma}_{\theta} \nabla_{\rvx^{t}} \mathcal{D}(\rvx^{t}, \rvx^t_{\text{adv}}), \boldsymbol{\Sigma}_{\theta}),
    \label{eq: gdmp}
\end{equation}
Given a DDPM $(\mu_{\phi}(\rvx_t), \Sigma_{\theta}(\rvx_t))$, a gradient scale of guidance $s$. 
$\rvx_{t}$ is the data being purified, and $\rvx^t_{\text{adv}}$ is the adversarial example at $t$.
Also, GDMP empirically finds that multiple purification steps can improve the robustness.
In the original evaluation of GDMP, the defense against the PGD attack consists of four purification steps, with each step including 36 forward steps and 36 denoising steps.
For BPDA+EOT adaptive attack, GDMP uses two purification steps, each consisting of 50 forward steps and 50 denoising steps.

\citet{lee2023robust} evaluated GDMP with three types of guidance and concluded that No-Guidance provides the best robustness performance when using the surrogate process to compute the full gradient through direct backpropagation.
In our evaluation, we incorporate the surrogate process with No-Guidance to evaluate GDMP.
Since it is impossible to calculate the gradients of the full defense system, we use a surrogate process consisting of same number of purification steps but with larger step size in the attack (with 6 denoising steps and 10 denoising steps for PGD+EOT and BPDA+EOT attack, respectively). Notably, GDMP only uses one purification run with 45 forward steps to evaluate on ImageNet-1K, which we keep consistent with this setting.

\subsection{GNS}
\citet{lee2023robust} emphasizes the importance of selecting optimal hyperparameters in DBP methods for achieving better robust performance.
Hence, \citet{lee2023robust} proposed Gradual Noise-Scheduling (GNS) for multi-step purification, which is based on the idea of choosing the best hyperparameters for multiple purification steps. It is basically the same architecture as GDMP (no guidance), but with different purification steps, forward steps and denoising steps.
Specifically, GNS sets different forward and reverse diffusion steps and gradually increases the noise level at each subsequent purification step.
We just keep the same hyperparameter settings and also use an ensemble of 10 runs to evaluate the method.

\section{Surrogate Process of Gradient Computation}
\label{Surrogate Process of Gradient Computation}
The surrogate process is an efficient approach for computing approximate gradients through backpropagation, as proposed by \citep{lee2023robust}.
White-box adaptive attacks, such as PGD+EOT, involve an iterative optimization process that requires computing the exact full gradients of the entire system, result in high memory usage and increased computational time.
DBP methods often include a diffusion model as an add-on purifier, which the model requires extensive function calls during reverse generative process.
Hence, it is hard to compute the exact full gradient of DBP systems efficiently.
The surrogate process takes advantage of the fact that, given a fixed total amount of noise, we can denoise it using fewer denoising steps \citep{song2021denoising}, but the gradients obtained from the surrogate process slightly differ from the exact gradients.
Instead of using the full denoising steps, we approximate the original denoising process with fewer function calls, which allows us to compute gradients by directly back-propagating through the forward and reverse processes.

There are other gradient computation methods such as adjoint method in DiffPure \citep{li2020scalable,nie2022diffusion}.
It leverages an underlying numerical SDE solver to solve the reverse-time SDE.
The adjoint method can theoretically compute exact full gradient, but in practice, it relies on the performance of the numerical solver, which is insufficient to measure the model robustness \citep{zhuang2020adaptive,lee2023robust,chen2024robust}.
\citet{lee2023robust} conducted a comprehensive evaluation of both gradient computation methods and concluded that utilizing the surrogate process for gradient computation poses a greater threat to model robustness.
Hence, we use gradients obtained from a surrogate process in all our experiments.

\newpage
\section{Adaptive White-box PGD+EOT Attack for SSNI}
\label{A: adaptive pgd+eot}
\citet{madry2018towards} proposed Projected Gradient Descent (PGD), which is a strong iterative adversarial attack. Combining PGD with Expectation Over Transformation (EOT) \citep{anish2018synthesizing} has become a powerful adaptive white-box attack against AP methods. In our defense system, we add a reweighting module to existing DBP methods, so this information must be included during a white-box attack. We decide to place the reweighting process outside the EOT loops because EOT provides more samples with different random transformations.
It effectively reduces the randomness of gradient computation but this process does not greatly affect the EPS value of the data. 
In addition, this approach further reduces the computational cost.
\begin{algorithm}[H]
\caption{Adaptive white-box PGD+EOT attack for SSNI.}
\label{alg:pgd_eot}
\begin{algorithmic}[1]
\REQUIRE clean data-label pairs ($\rvx, y)$; purifier $f_p$; classifier $f_c$; a noise level $T$; a score network $s_{\theta}(\rvx,T)$; objective function $\gL$; perturbation budget $\epsilon$; step size $\alpha$; PGD iterations $K$; EOT iterations $N$.

\STATE Initialize $\rvx_0^{\text{adv}} \leftarrow \rvx$
\FOR{$k = 0,...,K - 1$}
    \STATE Computing sample-specific noise levels: $t(\rvx^{\text{adv}}_k) \leftarrow f(\left\|s_\theta(\rvx^{\text{adv}}_k, T)\right\|, T)$
    \STATE Average the gradients over EOT: $g_k \leftarrow \frac{1}{N} \sum_{i=1}^N \nabla_{\rvx^{\rm adv}} \gL\left(f_c(f_p(\rvx_k^{\text{adv}}, t(\rvx^{\text{adv}}_k))), y\right)$
    \STATE Update adversarial examples: $\rvx_{k+1}^{\text{adv}} \leftarrow \Pi_{\mathcal{B}_{\epsilon}(\rvx)} \left( \rvx_k^{\text{adv}} + \alpha \cdot \text{sign}(g_k) \right)$
\ENDFOR
\STATE return $\rvx^{\text{adv}} = \rvx_K^{\text{adv}}$
\end{algorithmic}
\end{algorithm}

\section{Adaptive White-box BPDA+EOT Attack for SSNI}
\label{BPDA+EOT Adaptive Attack}
\citet{athalye2018obfuscated} proposed Backward Pass Differentiable Approximation (BPDA), which is a popular adaptive attack to DBP defense methods.
When a defense system contains non-differentiable components where gradients cannot be directly obtained, the BPDA method substitutes these non-differentiable operations with a differentiable approximation during backpropagation in order to compute the gradients.
It typically utilizes an identity function $f(\rvx) = \rvx$ as the differentiable approximation function.
In specific, computing the exact full gradients via direct backpropagation through a diffusion model is time-consuming and memory-intensive, so BPDA attack assumes the purification process is an identity mapping.
This implies that we ignore the impact of the purification on the gradients and directly treat the gradient with respect to the purified data $\hat{\rvx}_0$ as the gradient with respect to the input $\rvx_0$.
\begin{algorithm}[H]
\caption{Adaptive white-box BPDA+EOT attack.}
\label{alg:bpda_eot}
\begin{algorithmic}[1]
\REQUIRE clean data-label pairs ($\rvx, y)$; purifier $f_p$; classifier $f_c$; approximation function $f_{\text{apx}}$; a noise level $T$; a score network $s_{\theta}(\rvx,T)$; objective function $\gL$; perturbation budget $\epsilon$; step size $\alpha$; PGD iterations $K$; EOT iterations $N$.
\STATE Initialize $\rvx_0^{\text{adv}} \leftarrow \rvx$\;
\FOR{$k \leftarrow 0$ to $K-1$}
    \STATE Computing sample-specific $t$:  $t(\rvx^{\text{adv}}_k) \leftarrow f(\left\|s_\theta(\rvx^{\text{adv}}_k, T)\right\|, T)$
    \STATE Average the gradient over EOT samples: $ g_k \leftarrow \frac{1}{N} \sum_{i=1}^N \nabla_{\rvx^{\rm adv}} \gL\left((f_c(f_{\text{apx}}(f_p(\rvx_k^{\text{adv}})))), y\right)$
    \STATE Update adversarial examples: $\rvx_{k+1}^{\text{adv}} \leftarrow \Pi_{\mathcal{B}_{\epsilon}(\rvx)} \left( \rvx_k^{\text{adv}} + \alpha \cdot \text{sign}(g_k) \right)$
\ENDFOR
\STATE \textbf{return} $\rvx^{\text{adv}} = \rvx_K^{\text{adv}}$\;
\end{algorithmic}
\end{algorithm}

\section{Performance Evaluation of SSNI-L}
\label{Sec: Performance Evaluation of SSNI-L}
We also incorporate \emph{SSNI-L} reweighting framework with existing DBP methods for accuracy-robustness evaluation.
In Table~\ref{table:cifar10 SSNI-L SUB}, we report the results against PGD+EOT $\ell_{\infty}(\epsilon=8/255)$ and $\ell_{2}(\epsilon=0.5)$ threat models on CIFAR-10, respectively.
We can see that \emph{SSNI-L} can still support DBP methods to better trade-off between standard accuracy and robust accuracy.
Also, we report the results against BPDA+EOT $\ell_{\infty}(\epsilon=8/255)$ threat model on CIFAR-10 in Table~\ref{table:bpda2810}.
Overall, \emph{SSNI-L} slightly decreases the robustness of DBP methods against PGD+EOT attack and maintain the robustness of DBP methods against BPDA+EOT attack, but there is a notable improvement in standard accuracy.
\newpage
\begin{table*}[htbp]
    \centering
    \caption{Standard and robust accuracy of DBP methods against adaptive white-box PGD+EOT (left: $\ell_\infty (\epsilon = 8/255)$, right: $\ell_2 (\epsilon = 0.5)$) on \emph{CIFAR-10}. WideResNet-28-10 and WideResNet-70-16 are used as classifiers. We compare the result of DBP methods with and without \emph{SSNI-L}. We report mean and standard deviation over three runs. We show the most successful defense in \textbf{bold}.}
    \vspace{2pt}
    \label{table:cifar10 SSNI-L SUB}
    \footnotesize
    \begin{subtable}{0.49\linewidth}
        \centering
        \begin{tabular}{clcc}
        \toprule
        \multicolumn{4}{c}{PGD+EOT $\ell_\infty~(\epsilon = 8/255)$}\\
        \midrule
        & DBP Method & Standard & Robust \\ \midrule
        \multirow{7}{*}{\rotatebox[origin=c]{90}{WRN-28-10}}
        & \citet{nie2022diffusion} & 89.71$\pm$0.72 & \bf{47.98$\pm$0.64} \\
        & + \cellcolor{lg}{\emph{SSNI-L}} & \cellcolor{lg}{\bf{92.97$\pm$0.42}} & \cellcolor{lg}{46.35$\pm$0.72} \\
        \cmidrule{2-4}
        & \citet{wang2022guided} & 92.45$\pm$0.64 & \bf{36.72$\pm$1.05} \\
        & + \cellcolor{lg}{\emph{SSNI-L}} & \cellcolor{lg}{\bf{93.62$\pm$0.49}} & \cellcolor{lg}{36.59$\pm$1.29} \\
        \cmidrule{2-4}
        & \citet{lee2023robust} & 90.1$\pm$0.18 & \bf{56.05$\pm$1.11} \\
        & + \cellcolor{lg}{\emph{SSNI-L}} &  \cellcolor{lg}{\bf{93.49$\pm$0.33}} &  \cellcolor{lg}{53.71$\pm$0.48} \\
        \midrule

        \multirow{7}{*}{\rotatebox[origin=c]{90}{WRN-70-16}}
        & \citet{nie2022diffusion} & 90.89$\pm$1.13 & \bf{52.15$\pm$2.30} \\
        & + \cellcolor{lg}{\emph{SSNI-L}} & \cellcolor{lg}{\bf{93.82$\pm$0.49}} & \cellcolor{lg}{49.94$\pm$0.33} \\
        \cmidrule{2-4}
        & \citet{wang2022guided} & 93.10$\pm$0.51 & \bf{43.55$\pm$0.58} \\
        & + \cellcolor{lg}{\emph{SSNI-L}} & \cellcolor{lg}{\bf{93.88$\pm$0.49}} & \cellcolor{lg}{43.03$\pm$0.60} \\
        \cmidrule{2-4}
        & \citet{lee2023robust} & 89.39$\pm$1.12 & \bf{56.97$\pm$0.33} \\
        & + \cellcolor{lg}{\emph{SSNI-L}} & \cellcolor{lg}{\bf{92.64$\pm$0.40}} & \cellcolor{lg}{52.86$\pm$0.46} \\
        \bottomrule
        \end{tabular}
    \end{subtable}
    \begin{subtable}{.49\linewidth}
        \footnotesize
        \centering
        \begin{tabular}{clcc}
        \toprule
        \multicolumn{4}{c}{PGD+EOT $\ell_2~(\epsilon = 0.5)$}\\
        \midrule
        & DBP Method & Standard & Robust \\ \midrule 
        \multirow{7}{*}{\rotatebox[origin=c]{90}{WRN-28-10}}
        & \citet{nie2022diffusion} & 91.80$\pm$0.84 & \bf{82.81$\pm$0.97} \\
        & + \cellcolor{lg}{\emph{SSNI-L}} & \cellcolor{lg}{\bf{93.82$\pm$0.37}} & \cellcolor{lg}{81.12$\pm$0.80} \\
        \cmidrule{2-4}
        & \citet{wang2022guided} & 92.45$\pm$0.64 & \bf{82.29$\pm$0.82} \\
        & + \cellcolor{lg}{\emph{SSNI-L}} & \cellcolor{lg}{\bf{93.62$\pm$0.49}} & \cellcolor{lg}{80.66$\pm$1.31} \\
        \cmidrule{2-4}
        & \citet{lee2023robust} & 90.10$\pm$0.18 & 83.66$\pm$0.46 \\
        & + \cellcolor{lg}{\emph{SSNI-L}} & \cellcolor{lg}{\bf{93.49$\pm$0.33}} & \cellcolor{lg}{\bf{85.29$\pm$0.24}} \\
        \midrule

        \multirow{7}{*}{\rotatebox[origin=c]{90}{WRN-70-16}}
        & \citet{nie2022diffusion} & 92.90$\pm$0.40 & 82.94$\pm$1.13 \\
        & + \cellcolor{lg}{\emph{SSNI-L}} & \cellcolor{lg}{\bf{94.99$\pm$0.24}} & \cellcolor{lg}{\bf{84.44$\pm$0.56}} \\
        \cmidrule{2-4}
        & \citet{wang2022guided} & 93.10$\pm$0.51 & \bf{85.03$\pm$0.49} \\
        & + \cellcolor{lg}{\emph{SSNI-L}} & \cellcolor{lg}{\bf{93.88$\pm$0.49}} & \cellcolor{lg}{82.88$\pm$0.79} \\
        \cmidrule{2-4}
        & \citet{lee2023robust} & 89.39$\pm$1.12 & 84.51$\pm$0.37 \\
        & + \cellcolor{lg}{\emph{SSNI-L}} & \cellcolor{lg}{\bf{92.64$\pm$0.40}} & \cellcolor{lg}{\bf{84.90$\pm$0.09}} \\
        \bottomrule
        \end{tabular}
    \end{subtable}
\end{table*}
\begin{table}[htbp]
\centering
\caption{Standard and robust accuracy (\%) against adaptive white-box BPDA+EOT $\ell_\infty (\epsilon = 8/255)$ attack on \emph{CIFAR-10}. We compare the result of DBP methods with and without \emph{SSNI-L}. We report mean and standard deviation over three runs. We show the most successful defense in \textbf{bold}.}
\footnotesize
\begin{tabular}{clcc}
\toprule
\multicolumn{4}{c}{BPDA+EOT $\ell_\infty~(\epsilon = 8/255)$}\\
\midrule
& DBP Method & Standard & Robust \\
\midrule
\multirow{7}{*}{\rotatebox[origin=c]{90}{WRN-28-10}} & \citet{nie2022diffusion} & 89.71$\pm$0.72& \bf{81.90$\pm$0.49}\\
& \cellcolor{lg}{+ \emph{SSNI-L}} & \cellcolor{lg}{\bf{92.97$\pm$0.42}} & \cellcolor{lg}{80.08$\pm$0.96} \\
\cmidrule{2-4}
& \citet{wang2022guided} & 92.45$\pm$0.64 & 79.88$\pm$0.89 \\
& \cellcolor{lg}{+ \emph{SSNI-L}} & \cellcolor{lg}{\bf{93.62$\pm$0.49}} & \cellcolor{lg}{\bf{79.95$\pm$1.12}} \\
\cmidrule{2-4}
& \citet{lee2023robust} & 90.10$\pm$0.18 & 88.40$\pm$0.88 \\
& \cellcolor{lg}{+ \emph{SSNI-L}} & \cellcolor{lg}{\bf{93.49$\pm$0.33}} & \cellcolor{lg}{\bf{88.41$\pm$0.09}} \\
\bottomrule
\end{tabular}
\end{table}

\section{Ablation Study on Score Norm}
\label{A: score norm}
We further provide experiments with the single score norm instead of the EPS norm. \citet{yoon2021adversarial} shows score norm $\nabla_{\rvx}{\log p_t(\rvx)}$ is a valid measurement for adversarial detection. Incorporating single score norm with our SSNI-N framework, it still achieves notable improvement on standard accuracy, but the robustness drops. The single score norm is highly sensitive to noise levels, which makes it insufficient to completely distinguish between natural and adversarial examples.
\vspace{-1em}
\begin{table}[!htbp]
    \centering
    \caption{Standard and robust accuracy (\%) against adaptive white-box PGD+EOT $\ell_\infty (\epsilon = 8/255)$ attack on \emph{CIFAR-10}. We use \emph{single score norms} (i.e., $\left\|\nabla_{\rvx}{\log p_t(\rvx)}\right\|$). We report mean and standard deviation over three runs. We show the most successful defense in \textbf{bold}.}
    \vspace{2pt}
    \label{table:score norm}
    \centering
    \footnotesize
    \begin{tabular}{clcc}
    \toprule
    \multicolumn{4}{c}{PGD+EOT $\ell_\infty~(\epsilon = 8/255)$}\\
    \midrule
    & DBP Method & Standard & Robust \\ \midrule
    \multirow{7}{*}{\rotatebox[origin=c]{90}{WRN-28-10}} & \citet{nie2022diffusion}       &89.71$\pm$0.72&\bf{47.98$\pm$0.64}\\
    & + \cellcolor{lg}{\emph{SSNI-L}} &\cellcolor{lg}{91.31$\pm$0.24}&\cellcolor{lg}{46.92$\pm$0.52}\\
    & + \cellcolor{lg}{\emph{SSNI-N}} &\cellcolor{lg}{\bf{92.84$\pm$0.18}}&\cellcolor{lg}{47.20$\pm$1.22}\\
    \cmidrule{2-4}
    & \citet{wang2022guided}          & 92.45$\pm$0.64 & \bf{36.72$\pm$1.05} \\
     & + \cellcolor{lg}{\emph{SSNI-L}} &\cellcolor{lg}{93.15$\pm$0.92}&\cellcolor{lg}{35.72$\pm$1.33}\\
    & + \cellcolor{lg}{\emph{SSNI-N}}   & \cellcolor{lg}{\bf{93.42$\pm$0.60}}& \cellcolor{lg}{34.24$\pm$1.45 }\\
    \cmidrule{2-4}
    & \citet{lee2023robust}           & 90.10$\pm$0.18 & \bf{56.05$\pm$1.11} \\
    & + \cellcolor{lg}{\emph{SSNI-L}} &\cellcolor{lg}{93.40$\pm$0.49}&\cellcolor{lg}{54.52$\pm$0.46}\\
    & + \cellcolor{lg}{\emph{SSNI-N}}    & \cellcolor{lg}{\bf{93.55$\pm$0.42}} & \cellcolor{lg}{55.47$\pm$1.15} \\
    \bottomrule
    \end{tabular}
\end{table}

\section{Ablation Study on Bias Term}
\label{A: bias}
\vspace{-1em}
\begin{table*}[htbp]
\centering
\caption{Ablation study on the bias term $b$. We report the standard and robust accuracy of DBP methods against adaptive white-box PGD+EOT on \emph{CIFAR-10}. WideResNet-28-10 and WideResNet-70-16 are used as classifiers. We report mean and standard deviation over three runs. We show the most successful defense in \textbf{bold}.}
\vspace{2pt}
\label{table: bias term}
\footnotesize
\centering
\begin{tabular}{lcccccccc}
\toprule
\multicolumn{9}{c}{PGD+EOT $\ell_\infty~(\epsilon = 8/255)$}\\
\midrule
& Bias & 0 & 5 & 10 & 15 & 20 & 25 & 30 \\
\midrule
& \multicolumn{8}{c}{\cellcolor{lg}{WRN-28-10}} \\
\cmidrule{2-9}
\multirow{5}{*}{\rotatebox[origin=c]{90}{\emph{SSNI-N}}} & Standard & \bf{94.34$\pm$1.43} & 93.95$\pm$1.17 & 93.17$\pm$1.05 & 92.38$\pm$1.29 & 92.38$\pm$1.29 & 92.10$\pm$1.03 & 92.97$\pm$0.37\\
& Robust & 55.27$\pm$0.75 & 57.23$\pm$1.62 & 57.03$\pm$0.54 & 57.64$\pm$0.89 & 57.64$\pm$0.89 & 58.24$\pm$1.22 & \bf{59.18$\pm$1.65}\\
\cmidrule{2-9}
& \multicolumn{8}{c}{\cellcolor{lg}{WRN-70-16}} \\
\cmidrule{2-9}
& Standard & 94.34$\pm$0.45 & 93.82$\pm$1.56 & \bf{94.73$\pm$1.34} & 92.79$\pm$0.89 & 92.79$\pm$0.89 & 92.58$\pm$0.67 & 92.77$\pm$0.94\\
& Robust & 56.45$\pm$1.22 & 57.03$\pm$0.78 & 58.10$\pm$0.24 & 58.79$\pm$1.48 & \bf{59.57$\pm$1.19} & 58.59$\pm$0.52 & 58.59$\pm$0.52\\
\midrule
& \multicolumn{8}{c}{\cellcolor{lg}{WRN-28-10}} \\
\cmidrule{2-9}
\multirow{5}{*}{\rotatebox[origin=c]{90}{\emph{SSNI-L}}} & Standard & 92.91$\pm$0.55 & 92.97$\pm$0.42 & \bf{93.01$\pm$0.78} & 92.64$\pm$0.28 & 92.53$\pm$0.84 & 91.69$\pm$1.02 & 91.45$\pm$0.88 \\
& Robust & 46.29$\pm$0.46 & 46.35$\pm$0.72 & 46.28$\pm$0.35 & 46.75$\pm$0.63 & 47.11$\pm$0.92 & 47.05$\pm$0.50 & \bf{47.23$\pm$0.76} \\
\cmidrule{2-9}
& \multicolumn{8}{c}{\cellcolor{lg}{WRN-70-16}} \\
\cmidrule{2-9}
& Standard & 93.79$\pm$0.53 & \bf{93.82$\pm$0.49} & 93.77$\pm$0.64 & 92.85$\pm$0.82 & 92.32$\pm$0.90 & 92.08$\pm$0.40 & 91.89$\pm$0.52\\
& Robust & 49.34$\pm$0.85 & 49.94$\pm$0.33 & 48.83$\pm$0.92 & 50.72$\pm$0.77 & 50.80$\pm$0.94 & \bf{51.25$\pm$0.66} & 51.05$\pm$1.11 \\
\bottomrule
\end{tabular}
\end{table*}

\begin{table*}[!htbp]
\centering
\caption{Ablation study on the bias term $b$. We report standard and robust accuracy of DBP methods against adaptive white-box PGD+EOT on \emph{ImageNet-1K}. ResNet-50 is used as the classifier. We report mean and standard deviation over three runs. We show the most successful defense in \textbf{bold}.}
\vspace{2pt}
\label{table: bias term imagenet}
\footnotesize
\centering
\begin{tabular}{lcccccccc}
\toprule
\multicolumn{9}{c}{PGD+EOT $\ell_\infty~(\epsilon = 4/255)$}\\
\midrule
& Bias & 0 & 25 & 50 & 75 & 100 & 125 & 150 \\
\midrule
\multirow{3}{*}{\rotatebox[origin=c]{90}{\emph{SSNI-N}}} & \multicolumn{8}{c}{\cellcolor{lg}{RN-50}} \\
\cmidrule{2-9}
& Standard & 71.68$\pm$1.12 & 71.73$\pm$1.49 & \bf{71.96$\pm$0.13} & 68.80$\pm$0.74 & 68.41$\pm$0.59 & 67.63$\pm$1.08 & 66.45$\pm$1.53\\
& Robust & 39.33$\pm$0.34 & 40.28$\pm$0.28 & \bf{43.88$\pm$0.22} & 41.45$\pm$0.38 & 43.02$\pm$0.41 & 40.87$\pm$0.92 & 40.05$\pm$0.67 \\
\bottomrule
\end{tabular}
\end{table*}

\section{Ablation Study on Model Architecture}
\label{A: architecture}
\begin{table}[!htbp]
\centering
\caption{Standard and robust accuracy (\%) against adaptive white-box PGD+EOT $\ell_\infty (\epsilon = 8/255)$ attack on \emph{CIFAR-10}. The target classifier is a Swin-Transformer. We report mean and standard deviation over three runs. We show the most successful defense in \textbf{bold}.}
\vspace{2pt}
\footnotesize
\begin{tabular}{clcc}
\toprule
\multicolumn{4}{c}{PGD+EOT $\ell_\infty~(\epsilon = 8/255)$}\\
\midrule
& DBP Method & Standard & Robust \\
\midrule
\multirow{7}{*}{\rotatebox[origin=c]{90}{Swin-T}} & \citet{nie2022diffusion} & 88.48$\pm$0.49 & 37.11$\pm$0.83\\
& \cellcolor{lg}{+ \emph{SSNI-N}} & \cellcolor{lg}{\bf{88.67$\pm$0.55}} & \cellcolor{lg}{\bf{39.65$\pm$0.47}} \\
\cmidrule{2-4}
& \citet{wang2022guided} & 88.09$\pm$0.72 & 27.34$\pm$0.18 \\
& \cellcolor{lg}{+ \emph{SSNI-N}} & \cellcolor{lg}{\bf{89.45$\pm$0.34}} & \cellcolor{lg}{\bf{28.71$\pm$0.27}} \\
\cmidrule{2-4}
& \citet{lee2023robust} & 88.67$\pm$0.20 & 52.54$\pm$0.14 \\
& \cellcolor{lg}{+ \emph{SSNI-N}} & \cellcolor{lg}{\bf{91.21$\pm$0.11}} & \cellcolor{lg}{\bf{54.10$\pm$0.22}} \\
\bottomrule
\end{tabular}
\end{table}

\newpage
\section{Additional Justification of the Relationship Between Score Norms and Noise Levels}
\label{A: proof}

Extending the empirical findings from Section~\ref{Sec: motivation}, we further justify how score norms of different samples correlate with the noise level $t^{*}$ in diffusion models.
With Proposition~\ref{prop}, we find that the score norm varies with $t$, and the variation exceeds a threshold $\epsilon > 0$ when the time difference $|t_2 - t_1|$ is sufficiently large.
Thus, for two noisy samples $\vx_1$ and $\vx_2$ with {\it different score norms}, the monotonic increase of $\{\beta_t \}_{t \in [0, T]}$ with $t$ implies that different score norms correspond to different noise levels, i.e., $t_1 \neq t_2$. 
It implies that, for samples with different score norms, it is likely that they have undergone {\it different noise injection steps}.
\footnote{However, we note that this analysis only focuses on the property of diffusion models, disregarding the potential interactions between Gaussian and adversarial noise (which is often difficult to analyze directly). In this sense, it does {\it not} formally motivate the proposal of SSNI and is {\it not} considered a core contribution of this study.
We include this part of the content for completeness.
}.

\begin{definition}[Marginal Probability Density]\label{def: prob_density}
    Let $\gX$ be the image sample space, $p(\rvx_0)$ be the natural data distribution over $\gX$,
     and the diffusion process kernel defined in \eqref{eq: forward}. 
    The marginal probability density $p_t: \gX \to \sR_{+}$ at time $0 \leq t \leq T$ can be expressed as:
    \begin{equation}\label{eq: prob_density}
        p_t(\rvx) = \int_{\gX} q(\rvx | \rvx_0) p(\rvx_0) d \rvx_0,
    \end{equation}
    where $q(\rvx | \rvx_0)$ describes how a natural sample $\rvx_0$ evolves under the forward process to $\rvx$ at time $t$.    
\end{definition}

Before proceeding with the proof of Lemma~\ref{lem3} and Proposition~\ref{prop}, we start by presenting two lemmas to facilitate the proof.
\begin{lemma}\label{lem1}
    Let $p_t (\rvx)$ denote the marginal probability density of $\rvx$ at time $t$.
    For any $\rvx \in \gX$ and $0 \leq t \leq T$, the score function $\nabla_{\rvx} \log p_t (\rvx)$ at time $t$ can be expressed as:
    \begin{equation*}
        \nabla_{\rvx} \log p_t (\rvx) = \E_{p(\rvx_0 | \rvx)} \left[ \nabla_{\rvx} \log q(\rvx | \rvx_0) \right],
    \end{equation*}
    where $p(\rvx_0 | \rvx)$ is the posterior given by Bayes' Rule:
    \begin{equation*}
        p(\rvx_0 | \rvx) = \frac{ q(\rvx | \rvx_0) p(\rvx_0) }{ \int q(\rvx | \rvx_0) p(\rvx_0) d \rvx_0 }.
    \end{equation*}
\end{lemma}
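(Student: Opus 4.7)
The plan is to derive the identity by applying the log-derivative trick to the marginal density in Definition~\ref{def: prob_density} and then recognizing the Bayes-rule posterior. Concretely, I would first write $\nabla_{\rvx}\log p_t(\rvx) = \nabla_{\rvx} p_t(\rvx)/p_t(\rvx)$, which is well-defined wherever $p_t(\rvx) > 0$ (and the marginal of a Gaussian convolution is strictly positive on $\gX$, so this holds everywhere).

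Next, I would substitute $p_t(\rvx) = \int_{\gX} q(\rvx\mid\rvx_0)\, p(\rvx_0)\,d\rvx_0$ from \eqref{eq: prob_density} and interchange the gradient with the integral. This swap is justified because $q(\rvx\mid\rvx_0)$ is a Gaussian density (see \eqref{eq: forward}), so it is smooth in $\rvx$ with derivatives dominated by integrable functions of $\rvx_0$ under $p(\rvx_0)$, and dominated convergence applies. This yields
\begin{equation*}
\nabla_{\rvx} p_t(\rvx) = \int_{\gX} \nabla_{\rvx} q(\rvx\mid\rvx_0)\, p(\rvx_0)\,d\rvx_0.
\end{equation*}

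Then I would rewrite $\nabla_{\rvx} q(\rvx\mid\rvx_0) = q(\rvx\mid\rvx_0)\,\nabla_{\rvx}\log q(\rvx\mid\rvx_0)$ (again valid since $q(\rvx\mid\rvx_0) > 0$). Dividing by $p_t(\rvx)$ and multiplying the integrand by $p(\rvx_0)/p(\rvx_0)$ produces the kernel $q(\rvx\mid\rvx_0)\,p(\rvx_0)/p_t(\rvx)$, which is precisely the posterior $p(\rvx_0\mid\rvx)$ by Bayes' rule (since $p_t(\rvx)$ is the normalizing constant in the denominator). Collecting terms gives the claimed expectation form.

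The only nontrivial step is the interchange of differentiation and integration; the rest is algebraic manipulation. I expect this to be the main obstacle only in the sense of requiring a justification, but since $q(\rvx\mid\rvx_0)$ is an explicit Gaussian in $\rvx$ with mean $\sqrt{\bar\alpha_t}\,\rvx_0$ and covariance $(1-\bar\alpha_t)\mI$, its gradient in $\rvx$ is a polynomial-in-$\rvx_0$ times the Gaussian, and standard moment bounds on the data distribution $p(\rvx_0)$ (e.g., finite second moment, which is implicit in training diffusion models on bounded image data) suffice to invoke Leibniz's rule. No additional machinery beyond this dominated-convergence argument is needed.
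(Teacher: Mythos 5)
Your proposal is correct and follows essentially the same route as the paper's proof: write the score as $\nabla_{\rvx} p_t(\rvx)/p_t(\rvx)$, push the gradient through the integral via the Leibniz rule, rewrite $\nabla_{\rvx} q(\rvx\mid\rvx_0) = q(\rvx\mid\rvx_0)\,\nabla_{\rvx}\log q(\rvx\mid\rvx_0)$, and identify the resulting weight as the Bayes posterior $p(\rvx_0\mid\rvx)$. Your added dominated-convergence justification for the interchange is a detail the paper glosses over, but otherwise the arguments coincide step for step.
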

\begin{proof}
    \begin{equation}
        \begin{aligned}
            \nabla_{\rvx} \log p_t (\rvx) & = \nabla_{\rvx} \log \int q(\rvx | \rvx_0) p(\rvx_0) d \rvx_0  & \text{(by Def.~\ref{def: prob_density})} \\
                                          & = \frac{\nabla_{\rvx} \int q(\rvx | \rvx_0) p(\rvx_0) d \rvx_0}{\int q(\rvx | \rvx_0) p(\rvx_0) d\rvx_0} & \text{(chain rule)} \\
                                          & = \frac{\int \nabla_{\rvx} q(\rvx | \rvx_0) p(\rvx_0) d \rvx_0}{\int q(\rvx | \rvx_0) p(\rvx_0) d\rvx_0} & \text{(Leibniz integral rule)} \\
                                          & = \frac{\int \frac{ \nabla_{\rvx} q(\rvx | \rvx_0)}{q(\rvx | \rvx_0)} q(\rvx | \rvx_0) p(\rvx_0) d \rvx_0}{\int q(\rvx | \rvx_0) p(\rvx_0) d\rvx_0} & \text{(manipulate } q(\rvx | \rvx_0) \text{)}& \\
                                          & = \frac{\int (\nabla_{\rvx} \log q(\rvx | \rvx_0)) q(\rvx | \rvx_0) p(\rvx_0) d \rvx_0}{\int q(\rvx | \rvx_0) p(\rvx_0) d\rvx_0} & \text{(chain rule)} \\
                                          & = \int (\nabla_{\rvx} \log q(\rvx | \rvx_0)) p(\rvx_0  | \rvx) d \rvx_0 & \text{(} p(\rvx_0  | \rvx) \triangleq \frac{ q(\rvx | \rvx_0) p(\rvx_0) }{ \int q(\rvx | \rvx_0) p(\rvx_0) d \rvx_0 } \text{)}
        \end{aligned}
    \end{equation}
    This ends up with the expectation over the posterior $p(\rvx_0 | \rvx)$ as $\E_{p(\rvx_0 | \rvx)} \left[ \nabla_{\rvx} \log q(\rvx | \rvx_0) \right]$.
\end{proof}
Lemma~\ref{lem1} establishes a relationship between the marginal density and the forward process, where we can express the behavior of $\nabla_{\rvx} \log p_t (\rvx)$ in terms of the well-defined $q(\rvx | \rvx_0)$. 

\begin{lemma}\label{lem2}
    Consider the forward process $q(\rvx_t | \rvx_0)$ is defined with noise schedule $\{\beta_t\}_{t \in [0, T]}$, where $\beta_t \in (0, 1)$ for all $0 \leq t \leq T$.
    Then, for any $\rvx \in \gX$ and $0 \leq t \leq T$, the score function $\nabla_{\rvx} \log p_t (\rvx)$ can be decomposed as:
    \begin{equation*}
        \nabla_{\rvx} \log p_t (\rvx) = -g(t)\rvx + \E_{p(\rvx_0|\rvx)} \left[ g(t)\sqrt{\bar{\alpha}_t}\rvx_0 \right],
    \end{equation*}
where $g(t) = 1/(1 - \bar{\alpha}_t)$ and $\bar{\alpha}_t = \prod_{i=1}^{t} (1 - \beta_i)$.
\end{lemma}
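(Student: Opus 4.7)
My plan is to chain Lemma~\ref{lem1} with a direct computation of the per-sample score of the Gaussian forward kernel. Lemma~\ref{lem1} rewrites $\nabla_{\rvx} \log p_t(\rvx)$ as the posterior expectation $\E_{p(\rvx_0|\rvx)}[\nabla_{\rvx} \log q(\rvx|\rvx_0)]$, reducing the problem to computing the score of the explicit Gaussian transition density from \eqref{eq: forward}. This makes the lemma essentially a bookkeeping exercise once Lemma~\ref{lem1} is in hand, so I do not expect any genuine obstacle beyond keeping the $\rvx$--dependent and $\rvx_0$--dependent terms cleanly separated under the expectation.

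First I would write out $\log q(\rvx|\rvx_0)$ explicitly using \eqref{eq: forward}, namely $q(\rvx|\rvx_0) = \gN(\rvx;\sqrt{\bar\alpha_t}\rvx_0,(1-\bar\alpha_t)\rmI)$, so that
\begin{equation*}
    \log q(\rvx|\rvx_0) = -\frac{1}{2(1-\bar\alpha_t)} \left\lVert \rvx - \sqrt{\bar\alpha_t}\rvx_0 \right\rVert^2 + C(t),
\end{equation*}
where $C(t)$ absorbs the normalizing constants and does not depend on $\rvx$. Differentiating with respect to $\rvx$ then gives
\begin{equation*}
    \nabla_{\rvx} \log q(\rvx|\rvx_0) = -\frac{1}{1-\bar\alpha_t}\bigl(\rvx - \sqrt{\bar\alpha_t}\rvx_0\bigr) = -g(t)\rvx + g(t)\sqrt{\bar\alpha_t}\,\rvx_0,
\end{equation*}
with $g(t) = 1/(1-\bar\alpha_t)$ as defined in the statement; note that $\bar\alpha_t \in (0,1)$ since each $\beta_i \in (0,1)$, so $g(t)$ is well defined.

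Finally I would plug this expression into the identity of Lemma~\ref{lem1} and use linearity of expectation, noting that $-g(t)\rvx$ is constant with respect to $p(\rvx_0|\rvx)$ (because $\rvx$ is held fixed under the posterior over $\rvx_0$ given $\rvx$). This yields
\begin{equation*}
    \nabla_{\rvx}\log p_t(\rvx) = \E_{p(\rvx_0|\rvx)}\bigl[-g(t)\rvx + g(t)\sqrt{\bar\alpha_t}\,\rvx_0\bigr] = -g(t)\rvx + \E_{p(\rvx_0|\rvx)}\bigl[g(t)\sqrt{\bar\alpha_t}\,\rvx_0\bigr],
\end{equation*}
which is precisely the claimed decomposition. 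The only conceptual point worth emphasizing in the write-up is the separation of the deterministic $\rvx$-linear drift term from the posterior-averaged data-dependent term; the rest is direct Gaussian calculus.
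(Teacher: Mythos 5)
Your proposal is correct and follows essentially the same route as the paper's own proof: invoke Lemma~\ref{lem1} to reduce the score of $p_t$ to a posterior expectation of $\nabla_{\rvx}\log q(\rvx|\rvx_0)$, differentiate the Gaussian log-density explicitly, and split the expectation by linearity. Your added remark that $g(t)$ is well defined because $\bar\alpha_t\in(0,1)$ is a small but welcome extra.
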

\begin{proof}
    Denote the score function $s_t(\rvx) = \nabla_{\rvx} \log p_t(\rvx) $ for convenience.
    By applying the results from Lemma~\ref{lem1}, we write:
    \begin{equation*}
        s_t(\rvx) = \E_{p(\rvx_0 | \rvx)} \left[ q(\rvx | \rvx_0) \right].
    \end{equation*}
    Provided that $q(\rvx | \rvx_0) = \gN \left( \rvx; \sqrt{\bar{\alpha}_t} \rvx_0, (1 - \bar{\alpha}_t) \rmI \right)$, we compute the gradient of the log-probability of the forward process as:
    \begin{equation*}
        \begin{aligned}
             \nabla_{\rvx} \log q (\rvx | \rvx_0) & = \nabla_{\rvx} \gN \left( \rvx; \sqrt{\bar{\alpha}_t} \rvx_0, (1 - \bar{\alpha}_t) \rmI \right) \\
                      & = \nabla_{\rvx} \left[ -\frac{1}{2(1 - \bar{\alpha}_t)} (\rvx - \sqrt{\bar{\alpha}_t} \rvx_0)^{\top} (\rvx - \sqrt{\bar{\alpha}_t} \rvx_0) + C \right] \\
                      & = - \frac{1}{1 - \bar{\alpha}_t} (\rvx - \sqrt{\bar{\alpha}_t} \rvx_0), 
        \end{aligned}
    \end{equation*}
    where $C$ is a constant term independent of $\rvx$.
    Substitute the result back into $s_t(\rvx)$, we have:
    \begin{equation*}
        \begin{aligned}
            s_t(\rvx) & = \E_{p(\rvx_0 | \rvx)} \left[ q(\rvx | \rvx_0) \right] \\
                      & = - \frac{1}{1 - \bar{\alpha}_t} \E_{p(\rvx_0 | \rvx)} \left[ \rvx - \sqrt{\bar{\alpha}_t} \rvx_0 \right] \\
                      & = - \frac{1}{1 - \bar{\alpha}_t} \rvx + \frac{\sqrt{\bar{\alpha}_t}}{1 - \bar{\alpha}_t} \E_{p(\rvx_0 | \rvx)} [\rvx_0] \\
                      & = -g(t) \rvx + \E_{p(\rvx_0 | \rvx)} \left[ g(t) \sqrt{\bar{\alpha}_t} \rvx_0 \right].
        \end{aligned}
    \end{equation*}
    The Lemma is completed by substituting $g(t) = \frac{1}{1 - \bar{\alpha}_t}$ into the last equation.
\end{proof}
Lemma~\ref{lem2} expresses the score function by the noisy data $\rvx$ with noise level $t$ and the clean data $\rvx_0$.
We next examine how the score norm evolves to further understand its behavior as time increases.

\begin{lemma}
    Suppose there exists a constant $K > 0$ such that for all $t \geq 0$ and all $\rvx_t \in \gX$, the expected norm of the clean data given $\rvx_t$ satisfies: $\E_{\rvx_0 \sim p(\rvx_0 | \rvx)}[ \left\| \rvx_0 \right\|] \leq K \left\| \rvx_t \right\|$.
    Then, there exist constants $0 < C < 1$ and $T_0 > 0$ such that for all $t \geq T_0$:
    \begin{equation*}
        \frac{\left\| \nabla_{\rvx} \log p_t(\rvx) \right\|}{\left\| \rvx \right\|} > C.
    \end{equation*}
    \label{lem3}
\end{lemma}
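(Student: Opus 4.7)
\medskip
\noindent\textbf{Proof plan for Lemma~\ref{lem3}.}
The plan is to start from the explicit decomposition of the score obtained in Lemma~\ref{lem2}, namely
\begin{equation*}
    \nabla_{\rvx} \log p_t(\rvx) \;=\; -g(t)\,\rvx \,+\, g(t)\sqrt{\bar{\alpha}_t}\,\E_{p(\rvx_0 \mid \rvx)}[\rvx_0],
\end{equation*}
and to lower-bound its norm by the reverse triangle inequality. Factoring out $g(t) > 0$ and moving to norms yields
\begin{equation*}
    \left\| \nabla_{\rvx} \log p_t(\rvx) \right\|
    \;\geq\; g(t)\Bigl( \left\| \rvx \right\| - \sqrt{\bar{\alpha}_t}\,\bigl\|\E_{p(\rvx_0 \mid \rvx)}[\rvx_0]\bigr\| \Bigr).
\end{equation*}
This is the object I want to control from below as a multiple of $\left\| \rvx \right\|$.

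Next I would bring in the hypothesis of the lemma together with Jensen's inequality applied to the convex norm $\left\| \cdot \right\|$: this gives
\begin{equation*}
    \bigl\| \E_{p(\rvx_0 \mid \rvx)}[\rvx_0] \bigr\|
    \;\leq\; \E_{p(\rvx_0 \mid \rvx)}\bigl[\left\| \rvx_0 \right\|\bigr]
    \;\leq\; K\,\left\| \rvx \right\|.
\end{equation*}
Substituting this back and dividing both sides by $\left\| \rvx \right\| > 0$ yields the clean inequality
\begin{equation*}
    \frac{\left\| \nabla_{\rvx} \log p_t(\rvx) \right\|}{\left\| \rvx \right\|}
    \;\geq\; g(t)\,\bigl( 1 - K\sqrt{\bar{\alpha}_t} \bigr).
\end{equation*}
At this point the problem reduces to a purely scalar question about the noise schedule.

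Finally, I would study the right-hand side asymptotically. Since $\beta_t \in (0,1)$ for all $t$, the sequence $\bar{\alpha}_t = \prod_{i=1}^{t}(1 - \beta_i)$ is strictly decreasing and, under the standard schedule assumption $\bar{\alpha}_t \to 0$, we get $g(t) = 1/(1-\bar{\alpha}_t) \to 1$ and $K\sqrt{\bar{\alpha}_t} \to 0$. Hence $g(t)(1 - K\sqrt{\bar{\alpha}_t}) \to 1$, so for any chosen $C \in (0,1)$ (e.g.\ $C = 1/2$) there exists $T_0 > 0$ such that $g(t)(1 - K\sqrt{\bar{\alpha}_t}) > C$ for all $t \geq T_0$. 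Concretely, picking $T_0$ so that $\bar{\alpha}_{T_0} < \min\{1/(4K^2),\,1/2\}$ forces both factors to exceed $1/\sqrt{2}$ in an elementary way.

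The main obstacle I expect is not any single inequality but the \emph{sign management} when $t$ is small: if $K\sqrt{\bar{\alpha}_t} \geq 1$ the displayed lower bound is vacuous (or negative), so the reverse triangle inequality step must be interpreted only on the range where the bound is informative. This is exactly why the conclusion is stated only for $t \geq T_0$, and is handled by the asymptotic argument above, which requires the mild structural fact $\bar{\alpha}_t \to 0$ from the noise schedule. A secondary subtlety is that the hypothesis $\E[\left\| \rvx_0 \right\|] \leq K\left\| \rvx_t \right\|$ is used pointwise in $\rvx$, so $K$ and $T_0$ can be chosen uniformly, giving the uniform constant $C$ claimed.
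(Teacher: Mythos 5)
Your proposal is correct and follows essentially the same route as the paper: the Lemma~\ref{lem2} decomposition, the reverse triangle inequality combined with Jensen's inequality and the hypothesis to obtain $\left\| \nabla_{\rvx}\log p_t(\rvx)\right\| \geq g(t)\left(1-K\sqrt{\bar{\alpha}_t}\right)\left\|\rvx\right\|$, and then the asymptotics $\bar{\alpha}_t \to 0$, $g(t)\to 1$ to pick $T_0$. Your endgame is in fact cleaner than the paper's quadratic-discriminant analysis (and since $g(t)\geq 1$ always, your explicit choice $\bar{\alpha}_{T_0}<1/(4K^2)$ already yields $C=1/2$ even though the second factor only exceeds $1/2$ rather than $1/\sqrt{2}$).
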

\begin{proof}
    Again, denote $s_t(\rvx) = \nabla_{\rvx} \log p_t(\rvx) $.
    From Lemma~\ref{lem2}, we have:
    \begin{equation*}
        s_t(\rvx) = -g(t) \rvx + \E_{p(\rvx_0 | \rvx)} \left[ g(t) \sqrt{\bar{\alpha}_t} \rvx_0 \right].
    \end{equation*}
    Applying the triangle inequality to it, we have:
    \begin{equation*}
        \begin{aligned}
            \left\| s_t(\rvx) \right\| & \geq g(t) \left\| \rvx \right\| - \left\| \E_{p(\rvx_0 | \rvx)} \left[ g(t) \sqrt{\bar{\alpha}_t} \rvx_0 \right] \right\|  \\
                                       & \geq g(t) \left\| \rvx \right\| - g(t) \sqrt{\bar{\alpha}_t} \E_{p(\rvx_0 | \rvx)} \left[ \left\| \rvx_0 \right\| \right],
        \end{aligned}
    \end{equation*}
    where the second inequality comes from applying Jensen's inequality: $f(\E[X]) \leq \E[f(X)]$ for convex function $f$ and random variable $X$, which leads to:
    \begin{equation*}
        \begin{aligned}
        \left\| \E_{p(\rvx_0 | \rvx)} \left[ g(t) \sqrt{\bar{\alpha}_t} \rvx_0 \right] \right\| & \leq \E_{p(\rvx_0 | \rvx)} \left[ \left\| g(t) \sqrt{\bar{\alpha}_t} \rvx_0  \right\| \right] \\
         & = g(t) \sqrt{\bar{\alpha}_t} \E_{p(\rvx_0 | \rvx)} \left[ \left\| \rvx_0 \right\| \right].
        \end{aligned}
    \end{equation*}
    As we have assumed the existence of $K > 0$ that makes $\E_{\rvx_0 \sim p(\rvx_0 | \rvx)}[ \left\| \rvx_0 \right\|] \leq K \left\| \rvx_t \right\|$ holds for all $\rvx \in \gX$, we have:
    \begin{equation*}
        \begin{aligned}
            \left\| s_t(\rvx) \right\| & \geq g(t) \left\| \rvx \right\| - g(t) \sqrt{\bar{\alpha}_t} K \left\| \rvx \right\| \\
            & = g(t) \left\| \rvx \right\| ( 1 - K \sqrt{\bar{\alpha}_t}).
        \end{aligned}
    \end{equation*}
    We then turn to look into the asymptotic behavior of $g(t)$.
    We have $\bar{\alpha}_t = \prod_{s=1}^{t}(1 - \beta_s)$. 
    Since $0 < 1 - \beta_s < 1$ for all $s$, and the sequence is decreasing (as $\beta_s$ is increasing), it is easy to check that
    \begin{equation*}
        \lim_{t \to \infty} \bar{\alpha}_t = 0
    \end{equation*}
    and 
    \begin{equation*}
        \lim_{t \to \infty} g(t) = \lim_{t \to \infty} \frac{1}{1-\bar{\alpha}_t} = 1.
    \end{equation*}
    This can then be formalized as,
    for any $\epsilon > 0$, there exists a $T_\epsilon$ such that for all $t > T_\epsilon$:
    \begin{equation*}
        |g(t) - 1| < \epsilon, \text{ and } \sqrt{\bar{\alpha}_t} < \epsilon.
    \end{equation*}
    Then, for $t > T_\epsilon$, we have
    \begin{equation*}
        \begin{aligned}
            \left\| s_t(\rvx) \right\| & \geq g(t) \left\| \rvx \right\| ( 1 - K \sqrt{\bar{\alpha}_t} ) \\
                                       & > (1 - \epsilon) \left\| \rvx \right\| (1 - K \epsilon), \\
                                       & = (1 - \epsilon - K \epsilon + K\epsilon^2) \left\| \rvx \right\| \\
                                       & = (1 - (K + 1) \epsilon + K \epsilon^2) \left\| \rvx \right\|.
        \end{aligned}
    \end{equation*}
    To establish the desired inequality $\left\| s_t(\rvx) \right\| \geq C \left\| \rvx \right\|$ for constants $C > 0$, we next investigate whether this quatratic inequality $K \epsilon^2 + (K + 1) \epsilon + (1 - C) > 0$ can be solved.
    Denote the discriminant $D = (K + 1)^2 - 4 K (1 - C)$, we have
    \begin{equation*}
        \begin{aligned}
            D & = (K + 1)^2 - 4 K (1 - C) \\
              & = K^2 - 2K + 1 + 4KC \\
              & = (K - 1)^2 + 4KC \\
              & > 0 \qquad \text{(since } (K - 1)^2 \geq 0 \text{ and } K > 0 \text{).}
        \end{aligned}
    \end{equation*}
    Then, let $\epsilon_1 < \epsilon_2$ be two real roots of $K \epsilon^2 + (K + 1) \epsilon + (1 - C) = 0$, as the parabola opens upwards, i.e., $K > 0$, note that $\epsilon > 0$, we further need to ensure the smaller root $\epsilon = \frac{K + 1 - \sqrt{D}}{2K} > 0$, such that 
    \begin{equation*}
        \begin{aligned}
        K + 1     & > \sqrt{(K + 1)^2 - 4 K (1 - C)} \\
        (K + 1)^2 & > (K - 1)^2 + 4KC \\
               4K & > 4KC \\
                1 & > C.
        \end{aligned}
    \end{equation*}
    Putting them together, we have $\epsilon \in (0, \epsilon_1) \cup (\epsilon_2, \infty)$ that makes $\left\| s_t(\rvx) \right\| \geq C \left\| \rvx \right\|$ hold, given a constant $0 < C < 1$ in relation to $K$ and $\epsilon$.
    Setting $T_0 = T_\epsilon$ completes the proof.
\end{proof}

\begin{proposition}
    Consider the diffusion model satisfying all conditions as specified in Lemma~\ref{lem3}.
    Assume that there exist constants $K > 0$, such that $\beta_t \leq K$ for all $t \geq 0$.
    Additionally, suppose $\left\| \rvx \right\| \leq M$ for any $\rvx \in \gX$, for some $M > 0$.
    Then, for any $\epsilon$, there exists a constant $\Delta = 2\epsilon / (CK)$ such that for $t_1, t_2 \geq 0$, we have:
    \begin{equation*}
        | \left\| \nabla_{\rvx} \log p_{t_1}(\rvx)\right\| -  \left\|\nabla_{\rvx} \log p_{t_2}(\rvx) \right\| | > \epsilon, \; \text{ with $|t_1 - t_2| \geq \Delta$.}
    \end{equation*}
    \label{prop}
\end{proposition}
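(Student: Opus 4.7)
The plan is to prove Proposition~\ref{prop} by establishing a lower bound on how much the score norm $\|\nabla_{\rvx} \log p_t(\rvx)\|$ must change as $t$ varies, quantified through the schedule parameter $K$ and the constant $C$ from Lemma~\ref{lem3}. Starting from the decomposition in Lemma~\ref{lem2},
\[
\nabla_{\rvx} \log p_t(\rvx) = -g(t)\rvx + g(t)\sqrt{\bar{\alpha}_t}\, \E_{p(\rvx_0|\rvx)}[\rvx_0],
\]
the entire $t$-dependence of the score is made explicit through $g(t)=1/(1-\bar{\alpha}_t)$ and $\sqrt{\bar{\alpha}_t}$, both governed by the noise schedule $\{\beta_t\}$. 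The first step is therefore to separate the principal term $g(t)\rvx$ from the correction term $g(t)\sqrt{\bar{\alpha}_t}\E[\rvx_0|\rvx]$ via a reverse triangle inequality applied to the norm.

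The key estimate is to control $|g(t_1)-g(t_2)|$ and the correction contribution. Using the telescoping identity $\bar{\alpha}_{t_2} - \bar{\alpha}_{t_1} = -\sum_{i=t_1+1}^{t_2}\bar{\alpha}_{i-1}\beta_i$ and invoking the hypothesis $\beta_t \leq K$, I would obtain an estimate that is linear in $|t_1-t_2|$ for the $g$-gap. Combining this with the uniform lower bound $\|\nabla_{\rvx} \log p_t(\rvx)\| \geq C\|\rvx\|$ from Lemma~\ref{lem3}, the boundedness $\|\rvx\| \leq M$, and the inequality $\|g(t)\sqrt{\bar{\alpha}_t}\E[\rvx_0|\rvx]\| \leq g(t)\sqrt{\bar{\alpha}_t}KM$ reused from the proof of Lemma~\ref{lem3}, I aim to derive
\[
\bigl|\|\nabla_{\rvx} \log p_{t_1}(\rvx)\| - \|\nabla_{\rvx} \log p_{t_2}(\rvx)\|\bigr| \geq \frac{CK}{2}\,|t_1 - t_2|.
\]
Setting $|t_1 - t_2| \geq \Delta = 2\epsilon/(CK)$ then yields the desired strict inequality, with the factor $1/2$ absorbing the slack introduced by the triangle-inequality step and by the correction term.

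The main technical obstacle is the implicit $t$-dependence of the posterior $p(\rvx_0|\rvx)$, which makes directly differencing the score norm delicate: one cannot treat $\E[\rvx_0|\rvx]$ as constant across $t$. The plan is to absorb this source of variability into the $g(t)\sqrt{\bar{\alpha}_t}KM$ correction bound, exploiting the fact that $\sqrt{\bar{\alpha}_t}$ is small for moderate and large $t$, so the correction is second-order compared to the principal $g(t)\|\rvx\|$ term. If the slack is still too large in a given regime, the argument can be restricted to $t \geq T_0$ (as in Lemma~\ref{lem3}), where the uniform lower bound $\|\nabla_{\rvx} \log p_t(\rvx)\| \geq C\|\rvx\|$ is already active and the correction term is dominated uniformly by the principal one, preserving the $CK/2$ rate that underlies the stated $\Delta$.
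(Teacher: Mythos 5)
Your route differs mechanically from the paper's: you difference $\bar{\alpha}_t$ discretely via telescoping and push the estimate through the Lemma~\ref{lem2} decomposition, whereas the paper differentiates the squared norm $\|s_t(\rvx)\|^2$ in a continuous-time approximation, bounds $|\partial_t \|s_t(\rvx)\|^2| \le C\beta_t\|s_t(\rvx)\|$ by Cauchy--Schwarz, and applies the Mean Value Theorem before dividing by $\|s_{t_1}(\rvx)\|+\|s_{t_2}(\rvx)\|$. However, your plan has a genuine gap, and it is a directional one. The proposition asserts a \emph{lower} bound on the variation of the score norm, while every estimate your plan produces is an \emph{upper} bound. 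The telescoping identity together with $\beta_i\le K$ gives $|\bar{\alpha}_{t_1}-\bar{\alpha}_{t_2}|=\sum_{i=t_1+1}^{t_2}\bar{\alpha}_{i-1}\beta_i\le K|t_1-t_2|$, i.e.\ a Lipschitz-type \emph{upper} bound on the $g$-gap; fed through the reverse triangle inequality and the correction bound, this can only yield $\bigl|\|s_{t_1}(\rvx)\|-\|s_{t_2}(\rvx)\|\bigr|\le L\,|t_1-t_2|$ for some $L$, which is the opposite of your claimed $\ge \tfrac{CK}{2}|t_1-t_2|$. The hypothesis $\beta_t\le K$ bounds the schedule from above and therefore cannot force the score norm to \emph{keep changing}; a lower bound on the increments would require a lower bound on $\beta_t$, which is not assumed.

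Moreover, the fallback of restricting to $t\ge T_0$ makes the target inequality unprovable rather than easier. By your own correction bound, $g(t)(1-K\sqrt{\bar{\alpha}_t})\|\rvx\|\le\|s_t(\rvx)\|\le g(t)(1+K\sqrt{\bar{\alpha}_t})\|\rvx\|$, and since $g(t)\to 1$ and $\bar{\alpha}_t\to 0$, both $\|s_{t_1}(\rvx)\|$ and $\|s_{t_2}(\rvx)\|$ converge to $\|\rvx\|$ as $t_1,t_2\to\infty$; hence $\bigl|\|s_{t_1}(\rvx)\|-\|s_{t_2}(\rvx)\|\bigr|\to 0$ even as $|t_1-t_2|\to\infty$, so no bound of the form $\ge\tfrac{CK}{2}|t_1-t_2|$ can hold uniformly in that regime. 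For what it is worth, the paper's own proof has the same directional defect: it derives the MVT \emph{upper} bound $|g(t_2)-g(t_1)|\le CK\|s_\xi(\rvx)\|\,|t_2-t_1|$ and then substitutes it into a numerator as though it were a lower bound. So you have not missed a trick that the paper supplies; the key ingredient --- a genuine lower bound on the increment of the score norm --- is absent from both arguments and is not derivable from the stated hypotheses.
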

\begin{proof}
    Denote $s_t(\rvx) = \nabla_{\rvx} \log p_t(\rvx) $.
    Recall that we can express the score function as 
    \begin{equation*}
        \begin{aligned}
            s_t(\rvx) & = \E_{p(\rvx_0 | \rvx)} \left[ q(\rvx | \rvx_0) \right] \\
                      & = - \frac{1}{1 - \bar{\alpha}_t} \E_{p(\rvx_0 | \rvx)} \left[ \rvx - \sqrt{\bar{\alpha}_t} \rvx_0 \right] \\
                      & = - \frac{1}{1 - \bar{\alpha}_t} \rvx + \frac{\sqrt{\bar{\alpha}_t}}{1 - \bar{\alpha}_t} \E_{p(\rvx_0 | \rvx)} [\rvx_0]
        \end{aligned}
    \end{equation*}
    Let $g(t) = \left\| s_t(\rvx) \right\|^2$.
    We compute $\frac{\partial g(t)}{\partial t}$ using the chain rule and the product rule:
    \begin{equation*}
        \begin{aligned}
            \frac{\partial g(t)}{\partial t}& = 2 \left< s_t(\rvx), \frac{\partial s_t(\rvx)}{\partial t} \right> \\
                                          & = 2 \left< s_t(\rvx), \frac{\partial }{\partial t} \left( - \frac{1}{1 - \bar{\alpha}_t} \rvx + \frac{\sqrt{\bar{\alpha}_t}}{1 - \bar{\alpha}_t} \E_{p(\rvx_0 | \rvx)} [\rvx_0] \right) \right>
        \end{aligned}
    \end{equation*}
    Recall that $\bar{\alpha}_t = \prod_{i=1}^{t} (1 - \beta_i)$, we next derive the derivative of $\bar{\alpha}_t$.
    We consider the continuous approximation of $t$, where the product is approximated as an exponential of the integral
    \begin{equation*}
        \bar{\alpha}_t = \exp \left( \int_{0}^{t} \log ( 1 - \beta_i) \mathrm{d} i \right).
    \end{equation*}
    As $\beta_t$ is typically assumed to be small (which is an implicit common practice~\citep{ho2020denoising}), we further simplify $\log( 1 - \beta_i)$ with its first-order Taylor approximation, i.e., $\log( 1 - \beta_i) \approx - \beta_i$, which thus leads to the approximated $\bar{a}_t \approx \exp \left( - \int_0^t \beta_i \mathrm{d} i \right)$.
    This way we compute the derivative of $\bar{a}_t$ as
    \begin{equation*}
        \begin{aligned}
            \frac{\partial }{\partial t} \bar{a}_t & = \frac{\partial }{\partial t} \exp \left( - \int_0^t \beta_i \mathrm{d} i \right) \\
                & = \exp \left( - \int_0^t \beta_i \mathrm{d} i \right)  \frac{\partial }{\partial t} \left( - \int_0^t \beta_i \mathrm{d} i \right) \\
                & = \exp \left( - \int_0^t \beta_i \mathrm{d} i \right) (-\beta_t) \\
                & = - \beta_t \bar{\alpha}_t
        \end{aligned}
    \end{equation*}
    
    We can then compute the derivatives of the coefficients:
    \begin{equation*}
        \begin{aligned}
        \frac{\partial}{\partial t}\left(\frac{1}{1-\bar{\alpha}_t}\right) & = \frac{-1}{(1 - \bar{a}_t)^2 } \frac{\partial }{\partial t} \left( 1 - \bar{a}_t \right) \\
                               & = \frac{\beta_t \bar{\alpha}_t}{(1-\bar{\alpha}_t)^2}
        \end{aligned}
    \end{equation*}
    
    \begin{equation*}
        \begin{aligned}
            \frac{\partial}{\partial t}\left(\frac{\sqrt{\bar{\alpha}_t}}{1-\bar{\alpha}_t}\right) & = \frac{\frac{\partial}{\partial t} \sqrt{\bar{\alpha}_t} (1 - \bar{\alpha}_t) - \sqrt{\bar{\alpha}_t} \frac{\partial}{\partial t}( 1 - \bar{\alpha}_t) }{ (1 - \bar{\alpha}_t)^2 } \\
            & = \frac{ \left( \frac{1}{2} \bar{\alpha}_t^{-1/2} \frac{\partial \bar{\alpha}_t}{\partial t} \right) (1 - \bar{\alpha}_t) + \sqrt{\bar{\alpha}_t} \frac{\partial \bar{\alpha}_t}{\partial t} }{ (1 - \bar{\alpha}_t)^2 } \\
            & = \frac{ \left( \frac{1}{2} \bar{\alpha}_t^{-1/2} \left( -\beta_t \bar{\alpha}_t \right) \right) (1 - \bar{\alpha}_t) + \sqrt{\bar{\alpha}_t} \left( -\beta_t \bar{\alpha}_t \right) }{ (1 - \bar{\alpha}_t)^2 } \\
            & = \frac{ -\frac{1}{2} \beta_t \bar{\alpha}_t^{1/2} (1 - \bar{\alpha}_t) - \beta_t \bar{\alpha}_t^{3/2} }{ (1 - \bar{\alpha}_t)^2 } \\
            & = -\beta_t \bar{\alpha}_t^{1/2} \frac{ \frac{1}{2} (1 - \bar{\alpha}_t) + \bar{\alpha}_t }{ (1 - \bar{\alpha}_t)^2 } \\
            & = -\beta_t \sqrt{\bar{\alpha}_t} \frac{ 1 + \bar{\alpha}_t }{ 2(1 - \bar{\alpha}_t)^2 }
        \end{aligned}
    \end{equation*}
    Using the computed derivatives:
    \begin{equation*}
        \frac{\partial s_t(\rvx) }{ \partial t} = \frac{\beta_t \bar{\alpha}_t}{(1 - \bar{\alpha}_t)^2} \rvx - \beta_t \sqrt{\bar{\alpha}_t} \frac{ 1 + \bar{\alpha}_t }{ 2(1 - \bar{\alpha}_t)^2 } \E_{p(\rvx_0|\rvx)}[\rvx_0].
    \end{equation*}
    
    Substituting these back, we get:
    \begin{equation*}
        \begin{aligned}
            \frac{\partial g}{\partial t} & = 2 \left< - \frac{1}{1 - \bar{\alpha}_t} \rvx + \frac{\sqrt{\bar{\alpha}_t}}{1 - \bar{\alpha}_t} \E_{p(\rvx_0 | \rvx)} [\rvx_0],  \frac{\beta_t \bar{\alpha}_t}{(1 - \bar{\alpha}_t)^2} \rvx - \beta_t \sqrt{\bar{\alpha}_t} \frac{ 1 + \bar{\alpha}_t }{ 2(1 - \bar{\alpha}_t)^2 } \E_{p(\rvx_0|\rvx)}[\rvx_0]   \right>  \\ 
            & = \frac{2\beta_t\bar{\alpha}_t}{(1-\bar{\alpha}_t)^2} \left< s_t(\rvx), \rvx - \frac{1+\bar{\alpha}_t}{2\sqrt{\bar{\alpha}_t}}\E_{p(\rvx_0|\rvx)}[\rvx_0] \right>.
        \end{aligned}
    \end{equation*}    
    Next, under Cauchy-Schwarz inequality: $|\left< \rva, \rvb \right>| \leq \left\| \rva \right\| \left\| \rvb \right\|$, we have
    \begin{equation*}
        \begin{aligned}
            | \frac{\partial g}{\partial t} | & \leq \frac{2\beta_t\bar{\alpha}_t}{(1-\bar{\alpha}_t)^2} \left\| s_t(\rvx) \right\| \cdot \left\| \rvx - \frac{1+\bar{\alpha}_t}{2\sqrt{\bar{\alpha}_t}}\E_{p(\rvx_0|\rvx)}[\rvx_0] \right\| 
        \end{aligned}
    \end{equation*}
    
    Then, by the triangle inequality for vectors $\rva$ and $\rvb$:
    \begin{equation*}
        \left\| \rva - \rvb  \right\| = \left\| \rva + (- \rvb)  \right\| \leq \left\| \rva  \right\| + \left\| - \rvb \right\| = \left\| \rva \right\| + \left\| \rvb  \right\|,
    \end{equation*}
    and the assumption $\left\| \rvx \right\| \leq M$, we know that $\left\| \E_{p(\rvx_0|\rvx)}[\rvx_0] \right\| \leq M$ almost surely.
    Thus,
    \begin{equation*}
        \begin{aligned}
            | \frac{\partial g}{\partial t} | & \leq \frac{2\beta_t\bar{\alpha}_t}{(1-\bar{\alpha}_t)^2} \left\| s_t(\rvx) \right\| \cdot \left\| \rvx - \frac{1+\bar{\alpha}_t}{2\sqrt{\bar{\alpha}_t}}\E_{p(\rvx_0|\rvx)}[\rvx_0] \right\| 
            \\
            & \leq \frac{2\beta_t\bar{\alpha}_t}{(1-\bar{\alpha}_t)^2} \left\| s_t(\rvx) \right\| \cdot \left( \left\| \rvx \right\| + \left\| \frac{1+\bar{\alpha}_t}{2\sqrt{\bar{\alpha}_t}}\E_{p(\rvx_0|\rvx)}[\rvx_0] \right\| \right) \\
                                              & \leq \frac{2\beta_t\bar{\alpha}_t}{(1-\bar{\alpha}_t)^2} \left\| s_t(\rvx) \right\| \left( M + \frac{1+\bar{\alpha}_t}{2\sqrt{\bar{\alpha}_t}} M \right).
        \end{aligned}
    \end{equation*}
    
    Let $C_1 = \sup_{t \geq 0} \frac{2 \bar{\alpha}_t}{(1 - \bar{\alpha}_t)^2}$ and $C_2 = \sup_{t \geq 0} \frac{1+\bar{\alpha}_t}{2\sqrt{\bar{\alpha}_t}}$.
    As $0 < \bar{\alpha}_t < 1$, it is easy to check the maximum value of $\frac{1+\bar{\alpha}_t}{2\sqrt{\bar{\alpha}_t}}$, i.e., $C_2$ is achieved when $\bar{\alpha}_t$ gets close to $1$. 
    This also aligns with the condition where $\sup \frac{2 \bar{\alpha}_t}{(1 - \bar{\alpha}_t)^2}$ is reached.
    Thus, $\left( 1 + \frac{1+\bar{\alpha}_t}{2\sqrt{\bar{\alpha}_t}} \right)M \leq (1 + C_2) M = C_3 M$ for some constants $C_3 < 2$, since $C_2 \to 1$ when $\bar{\alpha}_t \to 1$.
    As a result, we have a constant $C > 0$ leading to the upper bound as
    \begin{equation*}
        | \frac{\partial g}{\partial t} | < C \beta_t \left\| s_t(\rvx) \right\|, \quad \text{with } C = C_1 \cdot \left(C_3 M\right).
    \end{equation*}

    Mean Value Theorem states that: for any $t_1, t_2$, there exists a $\xi$ between $t_1$ and $t_2$ such that:
    $|g(t_2) - g(t_1)| = |\frac{\partial g}{\partial t}(\xi)| |t_2 - t_1| \leq C\beta_\xi\|s_\xi(\mathbf{x})\| |t_2 - t_1|$.

    Applying this to $g(t)$, for any $t_1, t_2$, there exists a $\xi$ between $t_1$ and $t_2$, such that:
    \begin{equation*}
        \begin{aligned}
            |g(t_2) - g(t_1)| & = |\frac{\partial g}{\partial t}(\xi)| |t_2 -t_1| \\
                              & \leq C \beta_{\xi} \left\| s_{\xi}(\rvx) \right\| |t_2 - t_1| \\
                              & \leq C K \left\| s_{\xi}(\rvx) \right\| |t_2 - t_1| \qquad \text{(by assumption $\beta_t \leq K$)}
        \end{aligned}
    \end{equation*}
    Then, we can express
    \begin{equation*}
        \begin{aligned}
            | \left\| s_{t_2}(\rvx) \right\| - \left\| s_{t_1}(\rvx) \right\| | & = \frac{|\left\| s_{t_2}(\rvx) \right\|^2 - \left\| s_{t_1}(\rvx) \right\|^2 |}{(\left\| s_{t_2}(\rvx) \right\| + \left\| s_{t_1}(\rvx) \right\|)} \\
                        & \geq \frac{C K \left\| s_{\xi}(\rvx) \right\| |t_2 - t_1|}{\left\| s_{t_2}(\rvx) \right\| + \left\| s_{t_1}(\rvx) \right\|}
        \end{aligned}
    \end{equation*}
    From Lemma~\ref{lem3}, we have: there exists $C^{\prime} > 0$ and $T_0 > 0$ such that for all $t \geq T_0$: $|| s_t(\rvx)|| > C^{\prime} || \rvx ||$.
    Applying this to $t_1$, $t_2$, and $\xi$ with $\delta > 0$, we find that there exists a $C^{\prime}$ such that:
    \begin{equation*}
        \begin{aligned}
            \left\| s_{t_1}(\rvx) \right\|  & > C^{\prime} \left\| \rvx \right\| \\
            \left\| s_{t_2}(\rvx) \right\|  & > C^{\prime} \left\| \rvx \right\| \\
            \left\| s_{\xi}(\rvx) \right\|  & > C^{\prime} \left\| \rvx \right\| \\
        \end{aligned}
    \end{equation*}
    Substituting these back, we have:
    \begin{equation*}
        \begin{aligned}
        | \left\| s_{t_2}(\rvx) \right\| - \left\| s_{t_1}(\rvx) \right\| | & \geq \frac{C K_2 C^{\prime} \left\| \rvx \right\| | t_2 - t_1 | }{2 C^{\prime} \left\| \rvx \right\|} \\
                        & = \frac{CK_2}{2} |t_2 - t_1|.
        \end{aligned}
    \end{equation*}

    For any $\epsilon > 0$, let $\Delta = \frac{2\epsilon}{CK_2}$.
    Then for $|t_2 - t_1| \geq \Delta$, we have: $| \left\| s_{t_2}(\rvx) \right\| - \left\| s_{t_1}(\rvx) \right\| | > \epsilon$.
    This completes the proof.
\end{proof}

\begin{remark}[\textbf{Relationship between score norms and $t^*$}]
According to Proposition~\ref{prop}, for two distinct noise levels $t_1$ and $t_2$, we would obtain different score norms for the same sample $\rvx \in \gX$. Hence, we consider that different $t^*$ would change the score norm of the same sample $\rvx$.
In addition, we also imply that the score norms of the same sample $\rvx$ differ if and only if the noise levels $t^*$ input to the score network $s_\theta$ are different. This is because the score network only takes two input arguments $s_\theta(\rvx, t^*)$, which is sample $\rvx$ and noise level $t^*$.
For a natural example $\rvx$, we treat it as a sample with no adversarial perturbation, such that $\epsilon = 0$. From Figure~\ref{fig: eps norm with different perturbations}, cleaner samples typically exhibit smaller score norms, which means the data points are close to the high data density region \citep{song2019generative}.
Then, we should not diffuse the natural data further with a large $t^*$ to retain the original semantic information. Samples subjected to stronger perturbations exhibit larger score norms, which means the adversarial examples are far from the high data density area. We then inject certain amounts of Gaussian noise to the sample to cover its adversarial perturbation during forward diffusion.
We identify the score norm as a metric to assess the sample's proximity to the original data distribution $p(\rvx)$. 
\end{remark}

\section{Visualizations for Purification Results}
\begin{figure}[H]
    \centering
    \subcaptionbox{Adversarial Examples}{%
        \includegraphics[width=0.24\linewidth]{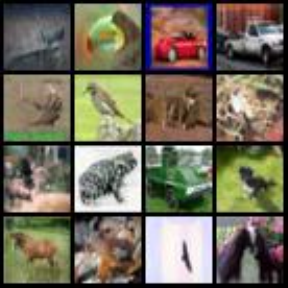}}
    \subcaptionbox{DiffPure}{%
        \includegraphics[width=0.24\linewidth]{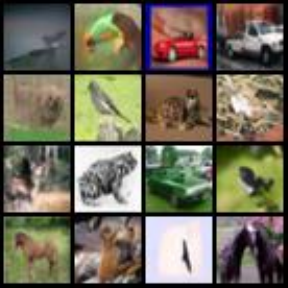}}
    \subcaptionbox{DiffPure + \emph{SSNI-L}}{%
        \includegraphics[width=0.24\linewidth]{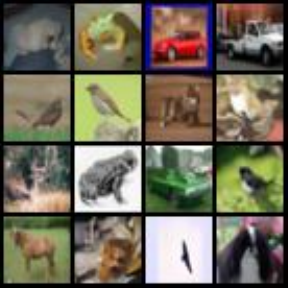}}
    \subcaptionbox{DiffPure + \emph{SSNI-N}}{%
        \includegraphics[width=0.24\linewidth]{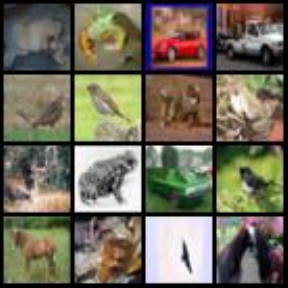}}
\end{figure}

\begin{figure}[H]
    \centering
    \subcaptionbox{Adversarial Examples}{%
        \includegraphics[width=0.24\linewidth]{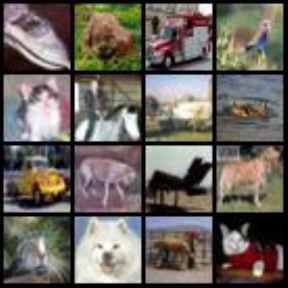}}
    \subcaptionbox{DiffPure}{%
        \includegraphics[width=0.24\linewidth]{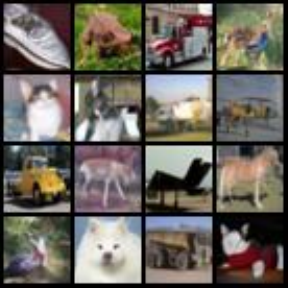}}
    \subcaptionbox{DiffPure + \emph{SSNI-L}}{%
        \includegraphics[width=0.24\linewidth]{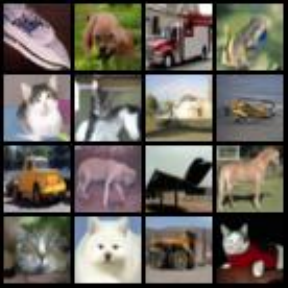}}
    \subcaptionbox{DiffPure + \emph{SSNI-N}}{%
        \includegraphics[width=0.24\linewidth]{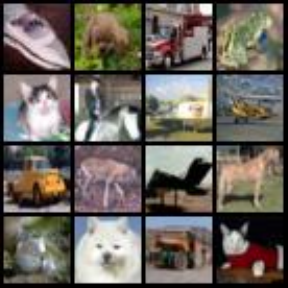}}
\end{figure}

\begin{figure}[H]
    \centering
    \subcaptionbox{Natural Examples}{%
        \includegraphics[width=0.24\linewidth]{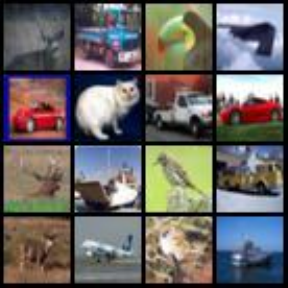}}
    \subcaptionbox{DiffPure}{%
        \includegraphics[width=0.24\linewidth]{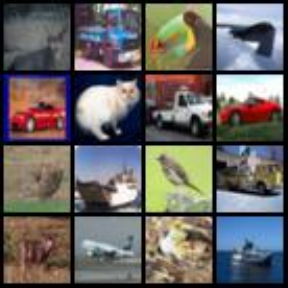}}
    \subcaptionbox{DiffPure + \emph{SSNI-L}}{%
        \includegraphics[width=0.24\linewidth]{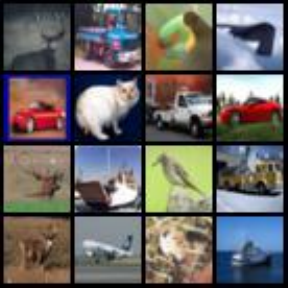}}
    \subcaptionbox{DiffPure + \emph{SSNI-N}}{%
        \includegraphics[width=0.24\linewidth]{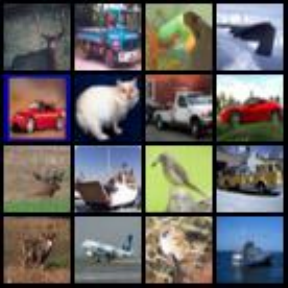}}
\end{figure}


\end{document}